\documentclass[twoside,11pt]{article}

% Any additional packages needed should be included after jmlr2e.
% Note that jmlr2e.sty includes epsfig, amssymb, natbib and graphicx,
% and defines many common macros, such as 'proof' and 'example'.
%
% It also sets the bibliographystyle to plainnat; for more information on
% natbib citation styles, see the natbib documentation, a copy of which
% is archived at http://www.jmlr.org/format/natbib.pdf

\usepackage{jmlr2e}

\usepackage{hyperref}
\usepackage{url}
\usepackage{amsfonts}
\usepackage{amsmath}
\usepackage{amssymb}
\usepackage{bm,amsbsy} % for bold symbols
\usepackage{algorithm,algorithmic}
\usepackage{wrapfig} % for wrapping text around figures
\usepackage{sidecap} % For putting caption beside figure
\usepackage{pgf,tikz,pgfplots}
\usetikzlibrary{arrows,automata,fit,positioning}

\setlength{\parindent}{0pt}
\setlength{\parskip}{6pt}

% Definitions of handy macros can go here

% --Specific Math definitiones -----

\newcommand{\Nrm}{\mathcal{N}}

\newcommand{\vx}{\mathbf{x}}

\newcommand{\vw}{\mathbf{w}}
\newcommand{\vm}{\mathbf{m}}

\newcommand{\vy}{\mathbf{y}}
\newcommand{\trp}{{^\top}} % transpose

\newcommand{\inv}{^{-1}}

\newcommand{\vmu}{\mathbf{\ensuremath{\bm{\mu}}}}
\newcommand{\vtheta}{\mathbf{\ensuremath{\bm{\theta}}}}
\newcommand{\veta}{\mathbf{\ensuremath{\bm{\eta}}}}
\newcommand{\vv}{\mathbf{v}}
\newcommand{\vg}{\mathbf{g}}
\newcommand{\vh}{\mathbf{h}}
\newcommand{\vu}{\mathbf{u}}
\newcommand{\vb}{\mathbf{b}}
\newcommand{\vz}{\mathbf{z}}
\newcommand{\vones}{\mathbf{1}}

\newcommand{\nsevar}{\sigma^2}

\newcommand{\RR}{\mathbb{R}}

\newcommand{\tonehlf}{\tfrac{1}{2}}

\newcommand{\figref}[1]{Fig.~\ref{fig:#1}}
\newcommand{\diag}{\mbox{diag}}
\newtheorem{thm}{Theorem}
\usepackage{color}
\definecolor{darkred}{rgb}{0.6, 0, 0}
\definecolor{gray}{RGB}{0.5,0.5,0.5}

\usepackage{algorithmic}

% Heading arguments are {volume}{year}{pages}{submitted}{published}{author-full-names}

%\jmlrheading{1}{2000}{1-48}{4/00}{10/00}{Anqi Wu, Oluwasanmi Koyejo and Jonathan Pillow}

% Short headings should be running head and authors last names

\ShortHeadings{Dependent relevance determination for smooth and structured sparse regression}{Wu, Koyejo and Pillow}
\firstpageno{1}

\begin{document}

\title{Dependent relevance determination for smooth and structured sparse regression}

\author{\name Anqi Wu \email anqiw@princeton.edu \\
 \addr Princeton Neuroscience Institute\\
 Princeton University\\
 Princeton, NJ 08544, USA
 \AND
 \name Oluwasanmi Koyejo \email sanmi@illinois.edu\\
 \addr Beckman Institute for Advanced Science and Technology \\
 	Department of Computer Science\\
	University of Illinois at Urbana-Champaign\\
	Urbana, Illinois, 61801, USA
 \AND
 \name Jonathan Pillow \email pillow@princeton.edu \\
 \addr Princeton Neuroscience Institute\\
 Princeton University\\
 Princeton, NJ 08544, USA}

%\editor{Leslie Pack Kaelbling}
\editor{}

\maketitle

\begin{abstract}% <trailing '%' for backward compatibility of .sty file
In many problem settings, parameter vectors are not merely sparse but dependent in such a way that non-zero coefficients tend to
 cluster together. We refer to this form of dependency as ``region
 sparsity.'' Classical sparse regression methods, such as the lasso
 and automatic relevance determination (ARD), which model parameters as
 independent {\it a priori}, and therefore do not exploit such
 dependencies. Here we introduce a hierarchical model for smooth,
 region-sparse weight vectors and tensors in a linear regression
 setting. Our approach represents a hierarchical extension of the
 relevance determination framework, where we add a transformed
 Gaussian process to model the dependencies between the prior
 variances of regression weights. We combine this with a structured
 model of the prior variances of Fourier coefficients, which
 eliminates unnecessary high frequencies. The resulting prior
 encourages weights to be region-sparse in two different bases
 simultaneously. We develop Laplace approximation and Monte Carlo Markov Chain (MCMC) sampling to provide efficient inference for the posterior. Furthermore, a two-stage convex relaxation of the Laplace approximation approach is also provided to relax the inevitable non-convexity during the optimization. We finally show substantial improvements over comparable methods for both simulated and real datasets from brain imaging.
\end{abstract}

\begin{keywords}
 Bayesian nonparametric, Sparsity, Structure learning, Gaussian Process, fMRI
\end{keywords}

\section{Introduction}
Recent work in statistics has focused on high-dimensional inference
problems in which the number of parameters equals or exceeds the
number of samples. We focus specifically on the linear regression setting: 
consider a scalar response $y_i \in \RR$ generated from an input vector
$\vx_i \in {\RR}^p$ via the linear model:
\begin{equation}
y_i = \vx_i\trp \vw + \epsilon_i , \quad \text{for} \quad i = 1, 2, \cdots, n,
\end{equation} with observation noise $\epsilon_i \sim \Nrm(0,
\nsevar)$. The regression (linear weight) vector $\vw \in \RR^p$ is
the quantity of interest. This general problem is ill-posed when $n\leq p$. However, it is surprisingly tractable when $\vw$ has special structure, such as sparsity in an appropriate basis. A large literature has provided theoretical
guarantees about the solvability of such problems, as well as a suite
of practical methods for solving them.

% recent literature in statistics and compressive sensing has focused on the case where $\vw$ is sparse, or has a small number of non-zero coefficients.

Methods based on simple sparsity such as the lasso
\citep{tibshirani1996regression} typically treat regression weights as
independent {\it a priori}. This neglects a statistical feature of many
real-world problems, which is that non-zero weights tend to arise in local
groups or clusters. In many problem settings, weights have an explicit geometric
relationship, such as indexing in time (e.g., time series regression) or space
(e.g., brain imaging data). If a single regression weight is non-zero, nearby
weights in time or space are also likely to be non-zero. Conversely, in a region
where most weights are zero, any particular coefficient is also likely to be
zero. Thus, nearby weights exhibit dependencies that are not captured by
independent priors. We refer to this form of dependency as {\it region
 sparsity}.

% Alternative sentence: "Neighborhoods of weights tend to be activated together"

A variety of methods have been developed to incorporate local dependencies
between regression weights, such as the group lasso \citep{yuan2006model}.
However, these methods typically require the user to pre-specify the group size
or to partition the weights into groups {\it a priori}. Such information is
unavailable in many applications of interest, and hard partitioning into groups
breaks dependencies between nearby coefficients that are assigned to different
groups.

In this paper, we take a Bayesian approach to inferring regression weights with
region-sparse structure. We introduce a hierarchical prior over $\vw$ of the
form: \begin{align}
 \vu &\sim \mathcal{GP}\\
 \vw | \vu &\sim \Nrm(0, C(\vu)),
%\vw |\vu &\sim \Nrm\Big(0, \diag(\exp(\vu))\Big), \\
%\vw |\vu &\sim \Nrm\Big(0, \diag[\exp(\vu)]\Big), \\
%\vw |\vu &\sim \Nrm\Big(0, \diag( e^\vu)\Big),
\end{align}
where $\vu$ is a latent vector that captures dependencies in the sparsity
pattern of $\vw$, and $\vw | \vu$ has a zero-mean Gaussian distribution with a
diagonal covariance matrix $C(\vu)$, given by a deterministic function of $\vu$.
We use a Gaussian process (GP) prior over $\vu$ to encode structural assumptions
about region sparsity (e.g., the typical size of clusters of non-zero weights
and the spacing between them). This model can be seen as an extension of
automatic relevance determination (ARD), in which the elements of $\vu$ are {\it
 a priori} independent \citep{mackay1992Bayesian, neal1995Bayesian}. We therefore refer to it as {\it
 dependent relevance determination} (DRD).

%Region sparsity implies that nearby weights tend to be activated in groups, but because the DRD prior covariance $C(\vu)$ is diagonal, it does not imply stronger forms of dependency such as smoothness of local coefficients. Most of the real data exhibit spatial and temporal correlations. Coefficients usually possess contiguous regions and smoothness. Hence, we are aiming at developing a universal approach easily integrating both structure sparsity and smoothness concurrently. We will show that combining them together into the estimation is never a trivial point.

Note that region-sparsity refers only to the sparsity pattern of regression
weights, i.e., the locations where they are non-zero, not to the particular
values of the weights themselves. This is reflected in the fact that we define
the DRD prior covariance matrix $C(\vu)$ to be diagonal, making the weights
conditionally independent given the pattern defined by $\vu$. In many cases,
however, we expect weights to be smooth as well as sparse due to the continuity of
the input regressors in space or time. Most of the real datasets do exhibit spatial and temporal correlations. 
Coefficients usually possess contiguous regions and smoothness.
Hence, we are aiming at developing a universal approach easily integrating both structured sparsity and smoothness concurrently.
To incorporate smoothness, we combine the
standard DRD prior with a squared exponential covariance function. The resulting
prior has a non-diagonal covariance matrix that encourages smoothness as well as
sparsity. We refer to this extension as {\it smooth dependent relevance determination} (smooth-DRD). Samples from the smooth-DRD prior
have local islands of smooth and non-zero weights, surrounded by large regions of
zeros. We will show that combining region-sparsity and smoothness together will significantly enhance the performance in a non-trivial way.

Unfortunately, exact inference under DRD and smooth-DRD priors is analytically
intractable. We therefore introduce an approximate inference method based on a Laplace approximation to the posterior over $\vu$, and a sampling-based
inference method using Monte Carlo Markov Chain (MCMC) sampling. We also derive
a two-stage convex relaxation of the Laplace approximation approach in order to
overcome the effects of bad local optima.

We show experimental evaluations on 1D simulated datasets comparing
the performance among different methods. In addition, the phase
transition curve analyses are carried out against lasso to show the
superiority of DRD and smooth-DRD in support recovery for group
structure sparsity with or without smoothness. Furthermore, the DRD
based priors are exploited for three brain imaging datasets. Domain
expertise and current evidence in brain imaging suggest that
discrimination performance is primarily driven by spatially smooth
activation within spatially sparse regions, and several estimation
algorithms have been proposed that exploit this structure
\citep{grosenick2011family,michel2011total,baldassarre2012structured,gramfort2013identifying}. We
provide experimental comparisons to these methods, showing the
superiority of DRD in practice. In particular, DRD provides spatial
decoding weights for brain imaging data that are both more
interpretable and achieve higher decoding performance.

%Sets of voxels in these brain imaging
%applications allowing to discriminate between different brain states are
%expected to form small localized and connected areas. Large scale fMRI decoding
%algorithms, such as Total Variation $l_1$ (TV-L1)
%\citep{michel2011total,baldassarre2012structured,gramfort2013identifying} and
%Graph-Net (GraphNet) \citep{grosenick2011family}, are included for comparison. 

%\anqicom{Sanmi: I almost always have an explicit "contributions" block so reviewers and other readers can clearly see the contributions of the paper. I would encourage this here as there are lots of useful new contributions under the hood e.g. the convex relaxation for Laplace, the reperameterisation trick for MCMC,phase transition, ..., all possibly of independent interest.}
%
%\anqicom{start here:}

Here we highlight our key contributions as follows:
\begin{itemize}
\item We introduce a new hierarchical model for smooth, region-sparse weight tensors. The model uses a Gaussian process to introduce dependencies between prior variances of
regression weights governing localized sparsity in weights and simultaneously imposes smoothness by integrating a smoothness-inducing covariance function into the prior distribution of weights.
\item We describe two methods for inferring the model parameters: one based on the Laplace approximation and a second based on MCMC. We propose a fast approximate inference method based on the Laplace approximation involving a novel two-stage convex relaxation of the log posterior in order to overcome the effects of bad local optima.
\item We show phase transition curves governing the transition from imperfect to near-perfect recovery for lasso and DRD estimators, revealing that group structure and smoothness can have a major impact on the recoverability of sparse signals.
\end{itemize}
%\anqicom{end here:}

This paper is organized as follows. In Sec.\ 2, we review the related structured
sparsity literature. In Sec.\ 3, we introduce our new region-sparsity and
smoothness inducing priors. In Sec.\ 4, we propose two approaches to Bayesian inference for parameter estimation, the evidence optimization via
Laplace approximation and the MCMC sampling. A two-stage
convex relaxation of the Laplace approximation approach is also introduced to alleviate the non-convexity with a more robust two-stage convexity. Sec.\ 5
introduces a detailed analysis of the structured sparsity and smoothness properties
of the DRD based priors and the other methods that can be used for this purpose.
Sec.\ 6 presents the phase transition analysis for lasso and DRD estimators. Sec.\ 7 shows some experiments on three real brain imaging datasets, comparing different methods that can be used for structured sparsity. Finally, Sec.\ 8
presents the conclusion and discussion of this work.

%%%%%%%%%%%%%%%%%%%%%%%%%%%%%%%%%%%%%%%%%%%%%%%%%%%%%%%%%%%%%%%%%%%%%%%%%%%%%%%%%%%%%%%%%%%%%%%%%%%%%
%%%%%%%%%%%%%%%%%%%%%%%%%%%%%%%%%%%%%%%%%%%%%%%%%%%%%%%%%%%%%%%%%%%%%%%%%%%%%%%%%%%%%%%%%%%%%%%%%%%%%
\section{Related work}
%In this section, we briefly review some of the relevant background
%literature on methods for sparse inference.

The classic method for sparse variable selection is the lasso,
introduced by \cite{tibshirani1996regression}, which places an $l_1$
penalty on the regression weights. This method can be interpreted as
a {\it maximum a posteriori} (MAP) estimate under a Laplace (or
double-exponential) prior. A fully Bayesian treatment of this model
was later developed by \cite{park2008Bayesian}. A variety of
Bayesian methods based on other sparsity-inducing prior distributions
have been developed, including the horseshoe prior
\citep{carvalho2009handling}, which uses a continuous density with an
infinitely tall spike at the origin and heavy tails, and the
spike-and-slab prior \citep{mitchell1988Bayesian} which consists of a
weighted mixture of a delta function (the spike) and a
broad Gaussian (the slab), both centered at the origin. 

Another approach to sparse variable selection comes from empirical
Bayes (also known as evidence optimization or ``type-II'' marginal
likelihood). These methods rely on a two-step inference procedure:
(1) optimize hyperparameters governing the sparsity pattern via ascent
of the marginal likelihood; and then (2) compute MAP estimates of the
parameters given the hyperparameters. The most popular such estimator
is automatic relevance determination (ARD), which prunes unnecessary
coefficients by optimizing the precision of each regression coefficient
under a Gaussian model \citep{mackay1992Bayesian, neal1995Bayesian}.
The relevance vector machine (RVM) was later formulated as a general
Bayesian framework for obtaining sparse solutions to regression and
classification tasks \citep{tipping2001sparse}. The RVM has an
identical functional form to the support vector machine, but provides
probabilistic analysis. \cite{tipping2003fast} then was proposed for
RVM to scale up to large scale training procedure.

% To improve
% its and interpretability, one popular direction is to
% propose a Bayesian probabilistic model. With the Bayesian framework,
% it's more natural to incorporate prior knowledge into a model in a
% hierarchical way.
%
% One of the powerful Bayesian extensions is to introduce
% double-exponential (Laplace) distributions as the priors on the
% regression parameters 

% The lasso estimate can be interpreted as a Bayesian posterior mode
% accessed by a Gibbs sampler. The Bayesian lasso provides interval
% estimates that can guide variable selection. 

All these methods can be interpreted as imposing a sparse
and independent prior on the regression weights. The resulting posterior
over weights has high concentration near the axes, so that many
weights end up at zero unless forced away strongly by the likelihood. 

In the field of structured sparsity learning, group lasso is the
most straightforward extension of lasso to capture sparsity existing
across collections of variables \citep{yuan2006model}. They achieved
the group sparse structure by introducing an $l_1$ penalty on the
$l_2$ norms of each group. Moreover, \cite{huang2011learning}
generalized the group sparsity idea by using coding complexity
regularization methods associated with the structure. A variety of
other papers have proposed alternative approaches to correlated or
structured regularization \citep{jacob2009group, liu2009nonparametric,
 kim2009statistical, friedman2010note, jenatton2011structured, kowalski2013social}.

Previous literature has also explored Bayesian methods for structured
sparse inference. A common strategy is to introduce a latent
multivariate Gaussian that controls the correlation structure
governing conditionally independent densities over
coefficients. \cite{gerven2009Bayesian} extended the univariate
Laplace prior to a novel multivariate Laplace distribution represented
as a scale mixture that induces coupling. \cite{hernandez2013learning}
described a similar approach that results in a marginally horseshoe
prior. Several other papers have proposed dependent generalizations
of the spike-and-slab prior. \cite{hernandez2013generalized}
described a group spike-and-slab distribution using a multivariate
Bernoulli distribution over the indicators of the spikes associated
with a group specification. Subsequently, \cite{andersen2014Bayesian,Andersen15} relaxed the hard-coded group
specification by encoding the structure with a generic covariance
function. Meanwhile, \cite{engelhardt2014Bayesian} introduced a
Bayesian model for structured sparsity that uses a Gaussian process
(GP) to control the mixing weights of the spike and slab prior in
proportion to feature similarity. Apart from imposing the correlation
structure on the independent spike and slab elements,
\cite{yu2012Bayesian} put forward a hierarchical Bayesian framework
with the mixing weights of the cluster patterns generated from Beta
distributions. Our work is most similar to \cite{engelhardt2014Bayesian} and \cite{Andersen15}, except that we use an
ARD-like approach with a conditionally Gaussian density over
coefficients instead of a spike and slab prior. Our work is also the
first that we are aware of that simultaneously captures sparsity and
smoothness.

\section{Dependent relevance determination (DRD) priors}
In this section, we introduce the DRD prior and the smooth-DRD prior, an
extension to incorporate smoothness of regression weights. We focus
on the linear regression setting with conditional responses
distributed as:
\begin{equation}\label{ll}
\vy|X, \vw, \nsevar \sim \Nrm(\vy|X \vw, \nsevar I),
\end{equation} 
where $X = [\vx_1,\ldots, \vx_n]\trp \in \RR^{n
 \times p}$ denotes the design matrix, $\vy = [y_1, \cdots,
y_n]\trp\in \RR^{n}$ is the observation vector, and $\sigma^2$ is the
observation noise variance, where $p$ is the dimension of the
input vectors and $n$ is the number of samples. % The $i$'th row of $X$ is the input vector
% $\vx_i\trp$.

%Fig.~\ref{fig:graphical_model} presents the graphical model for ARD, DRD and smooth-DRD.
 \begin{figure}[!t] % figure placement: here, top, bottom, or page
 \centering
 \includegraphics[width=0.9\textwidth]{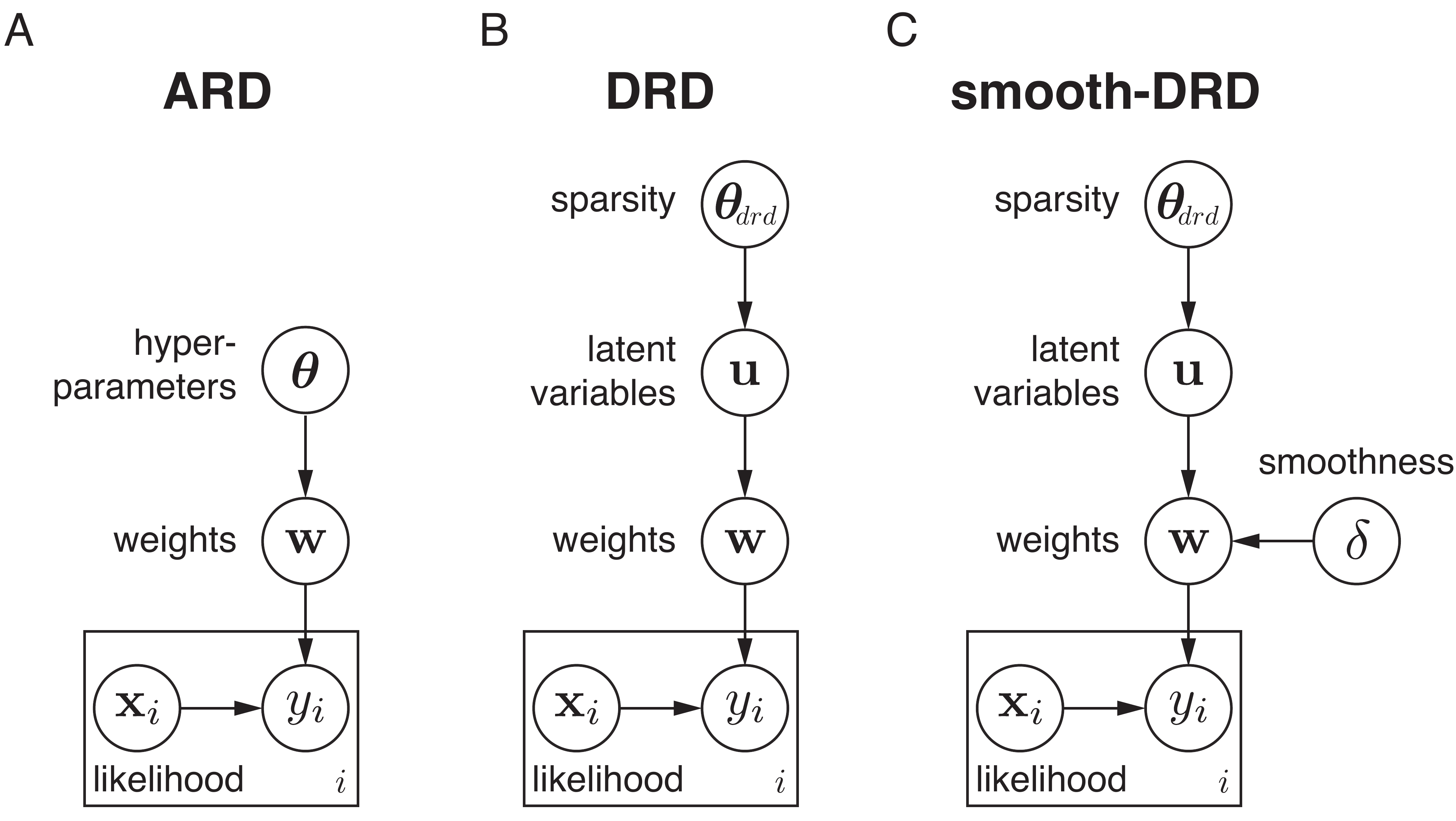} 
\caption{Graphical models for ARD, DRD and smooth-DRD.} 
\label{fig:graphical_model}
 \end{figure}

\subsection{Automatic relevance determination}

The relevance determination framework includes a family of estimators
that rely on a zero-mean multivariate normal prior:
\begin{equation}\label{wprior}
\vw |\vtheta \sim \Nrm(0, C(\vtheta)),
\end{equation} where the prior covariance matrix $C(\vtheta)$ is a
function of some hyperparameters $\vtheta$. The form of the dependence
of $C$ on $\theta$ leads to different forms of assumed structure,
including sparsity
\citep{tipping2001sparse, tipping2002analysis, Wipf08}, smoothness
\citep{sahani2003evidence, schmolck2008smooth}, or locality
\citep{park2011receptive}.

Automatic relevance determination (ARD) defines the prior covariance
to be diagonal, $C_{ii}=\theta_i^{-1}$, where a distinct
hyperparameter $\theta_i$ specifies the prior precision for the $i$'th
regression coefficient. ARD places an independent improper gamma
prior on each hyperparameter, $\theta_i \sim \textrm{gamma}(0,0)$, and
performs inference for $\{\theta_i\}$ by maximizing the marginal
likelihood. This sends many $\theta_i$ to infinity, pruning the
corresponding coefficients out of the model. A typical graphical model for ARD is presented in \figref{graphical_model}A.
The independence
assumption in the prior over hyperparameters means that there is no
tendency for nearby coefficients to remain in or be pruned from the
model. This is the primary shortcoming that our method seeks to
overcome.

% --- RIDGE ---
% The simplest estimator within this family is ridge regression is the simplest esimator within this family, which
% has covariance $C(\theta)=\theta^{-1}I$, so that $\theta$ is the prior
% precision. 

% --- ASD, ALD ---
% Automatic smoothness
% determination (ASD) (sparse in the frequency domain) assumes a
% non-diagonal prior covariance, given by a Gaussian kernel,
% $C_{ij}=\mbox{exp}(-\rho-\Delta_{ij}/2\delta)$ where $\Delta_{ij}$
% is the squared distance between the filter coefficients $\vw_i$ and
% $\vw_j$ in pixel space and $\vtheta=\{\rho, \delta\}$. Automatic
% locality determination (ALD) parametrizes the local region with a
% Gaussian form, so that prior variance of each filter coefficient is
% determined by its Mahalanobis distance (in coordinate space) from some
% mean location $\nu$ under a symmetric positive semi-definite matrix
% $\Psi$. The diagonal prior covariance matrix is given by
% $C_{ii}=\mbox{exp}(-\frac{1}{2}(\chi_i-\nu)^\top\Psi^{-1}(\chi_i-\nu)))$,
% where $\chi_i$ is the space-time location (i.e., filter coordinates)
% of the $i^{th}$ filter coefficient $\vw_i$ and
% $\vtheta=\{\nu, \Psi\}$.

\subsection{DRD: A hierarchical extension of ARD}

We extend the standard ARD model by adding a level of hierarchy.
Instead of directly optimizing hyperparameters that control sparsity
of each weight, as in ARD, we introduce a latent vector governed by a
GP prior to capture dependencies in the sparsity pattern over weights
(see \figref{graphical_model}B). Let $\vu \in \RR^{p}$ denote a latent
vector distributed according to a GP prior
\begin{align}
 \vu &\sim \mathcal{GP}(b\vones,K), \label{exp_non} 
 \end{align}
 where $b \in \RR$ is the scalar mean, $\vones$ is a length-$p$ vector of
 ones, and covariance matrix $K$ is determined by a squared
 exponential kernel. The $i,j$'th entry of $K$ is given by
\begin{equation} 
K_{ij} = \rho \exp\left( - \frac{||\chi_i-\chi_j||^2}{2l^2}\right), \label{eq:K}
\end{equation}
where $\chi_i$ and $\chi_j$ are the spatial locations of weights $w_i$
and $w_j$, respectively, and kernel hyperparameters are the marginal
variance $\rho>0$ and length scale $l>0$. Samples from this GP on a
grid of locations $\{\chi_i\}$ are smooth on the scale of $l$, and
have mean $b$ and marginal variance $\rho$.

To obtain a prior over region-sparse weight vectors, we transform
$\vu$ to the positive values via a nonlinear function $f$, and the
transformed latent vector $\vg = f(\vu)$ forms the diagonal of a
diagonal covariance matrix for a zero-mean Gaussian prior over the
weights:
\begin{align}
 C_{drd} &= {\mbox{diag}}\Big[f(\vu)\Big],
\end{align}
where $f$ is a monotonically increasing function that squashes
negative values of $\vu$ to near zero. Here we will mainly consider the exponential function $f(u) = \exp(u)$, but we will also consider ``soft-rectification'' function $f(u) = \log(1+\exp(u))$ in the experiment for numerical stability. When the GP
mean $b$ is very negative relative to the prior standard deviation
$\sqrt{\rho}$, most elements of $\vg$ will be close to zero, resulting
in weights $\vw$ with a high degree of sparsity (i.e., few weights far
from zero). The length scale $l$ determines the smoothness of samples
$\vu$ and thereby determines the typical width of bumps in the prior
variance $\vg$. We denote the set of hyperparameters governing the GP
prior on $\vu$ by $\vtheta_{drd} = \{ b, \rho, l \}$. \figref{drdgm}A
shows a depiction of sampling from the DRD generative model.
\begin{figure}[!t] % figure placement: here, top, bottom, or page
 \centering
 \includegraphics[width=1\textwidth]{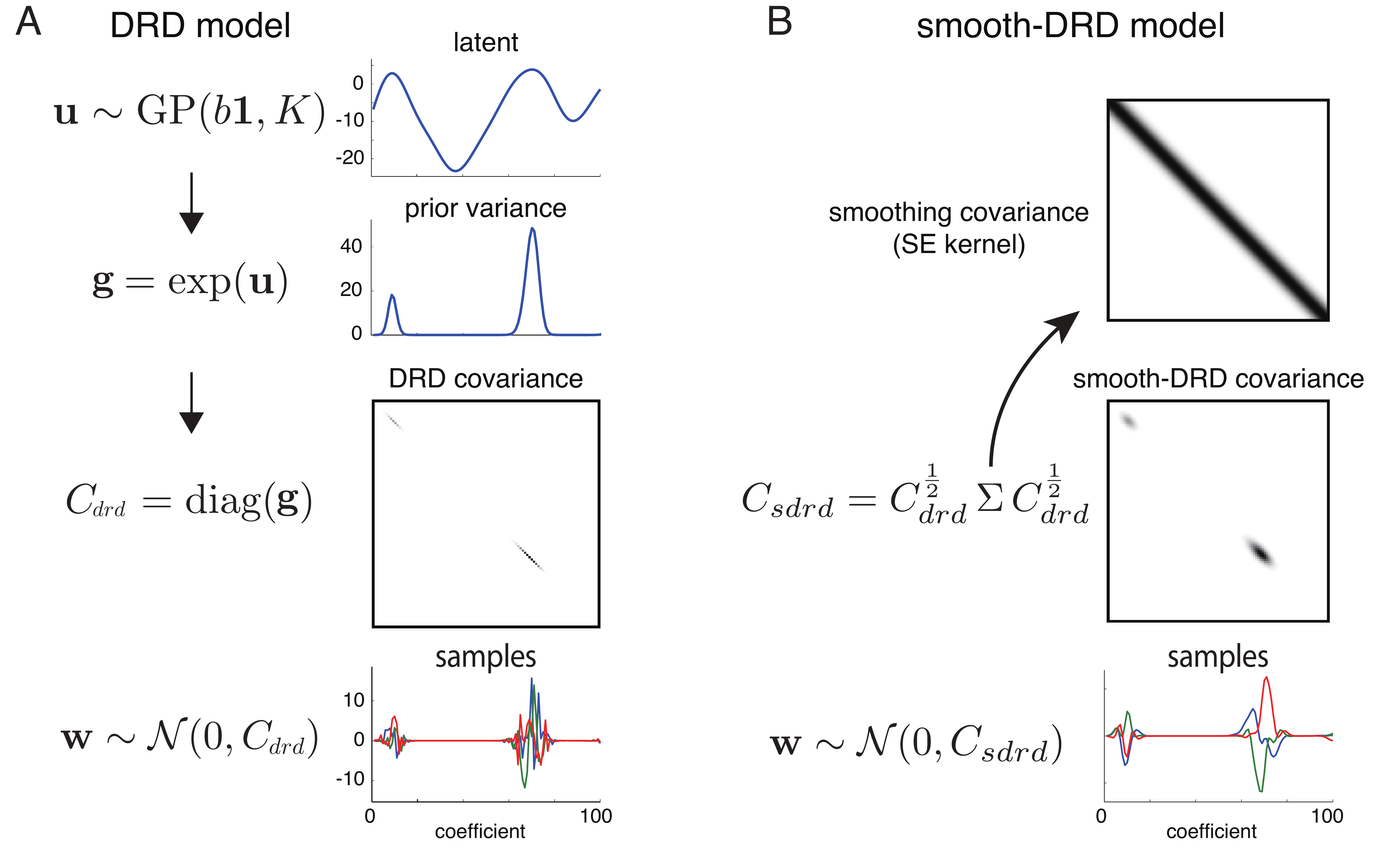}
 \caption{The sampling procedures for the generative models of DRD and smooth-DRD.}
 \label{fig:drdgm}
\end{figure}

\subsection{Smooth-DRD}
The standard DRD model imposes smooth dependencies in the prior
variances of the regression weights, but the weights themselves remain
uncorrelated (as reflected by the fact that the covariance $C_{drd}$
is diagonal). In many settings, however, we expect weights to exhibit
smoothness in addition to region sparsity. To capture this property,
we can augment DRD with a second Gaussian process, denoted as smooth-DRD, that induces
smoothness, contributing off-diagonal structure to the prior
covariance matrix while preserving the marginal variance pattern
imposed by DRD (see Fig. 1C).

Let $\Sigma$ denote a covariance matrix governed by a standard
squared-exponential GP kernel:
\begin{equation}
\Sigma_{ij} = \exp\left( - \frac{||\chi_i-\chi_j||^2}{2\delta^2}\right),
\end{equation}
with length scale $\delta$ and marginal variance set to 1.
Then we define the {\it smooth-DRD} covariance as the ``sandwich''
matrix given by:
\begin{equation}\label{csf}
C_{smooth-DRD} = C_{drd}^{\frac{1}{2}}\, \Sigma\, C_{drd}^{\frac{1}{2}},
\end{equation}
where $C_{drd}^{\frac{1}{2}}$ is simply the matrix square root of the
diagonal covariance matrix $C_{drd}$. The resulting matrix has the
same diagonal entries as $C_{drd}$, but has off-diagonal structure
governed by $\Sigma$ that induces smoothness. This matrix is positive semi-definite because, for all $\vx\in\RR^p$, $\vx\trp C_{smooth-DRD} \vx = (C_{drd}^{\tonehlf} \vx) \trp \Sigma (C_{drd}^{\tonehlf} \vx) \geq 0$, due to the positive semi-definiteness of
$\Sigma$. It is therefore a valid covariance matrix. \figref{drdgm}B
shows a depiction of sampling from the smooth-DRD generative model.
In the following, we will let $\vtheta$ denote the entire hyperparameter set
for the smooth-DRD prior and the noise variance, where
$\vtheta=\{\vtheta_{drd},\delta,\nsevar\}$.

\section{Parameter estimation}
In this section, we describe two methods for inference under the DRD
and smooth-DRD priors: (1) empirical Bayesian inference via evidence
optimization using the Laplace approximation; and (2) fully Bayesian
inference via MCMC sampling. The first seeks to find the MAP estimate
of the latent vector $\vu$ governing region sparsity via optimization
of the log marginal likelihood, and then provides a conditional MAP
estimate of the weights $\vw$. The second uses MCMC sampling to
integrate over $\vu$ and provides the posterior mean of $\vw$ given
the data via an average over samples.

 \subsection{Empirical Bayes inference with Laplace approximation}
 The likelihood $p(\vy|X,\vw,\nsevar)$ (eq.~\ref{ll}) and the prior
 $p(\vw|\vu,\vtheta_{drd},\delta)$ (eq.~\ref{wprior}) are both Gaussian given the latent
 variables $\vu$ and hyperparameters $\vtheta$, giving a conditionally Gaussian posterior over the
 regression weights:
\begin{equation} \label{eq:condMAPw}
p(\vw|X, \vy, \vu, \vtheta) = \Nrm(\vmu_{\vw}, \Lambda_{\vw}), 
\end{equation}
with covariance and mean given by
\begin{equation}\label{eq:muw}
\Lambda_{\vw} = (\tfrac{1}{{\nsevar}}X^\top X + C^{-1})^{-1}, \quad
\vmu_{\vw} = \tfrac{1}{{\nsevar}} \Lambda_{\vw} X^\top \vy, 
\end{equation}
where prior covariance matrix $C$ is a function of $\vu$ and
$\vtheta$. The posterior mean $\vmu_{\vw}$ is also the MAP estimate
of $\vw$ given latent vector $\vu$ and hyperparameters $\vtheta$.

% In the following, we will consider $\nsevar$
%a hyperparameter and include it in the hyperparameter vector $\vtheta$.

Empirical Bayes inference involves setting the hyperparameters by
maximizing the marginal likelihood or evidence, given by
\begin{equation} \label{eq:evidence}
p(\vy|X,\vtheta) = \iint p(\vy|X,\vw,\sigma^2) p(\vw|\vu,\delta) p(\vu|\vtheta_{drd})\, d\vw\, d\vu.
\end{equation}
We can take the integral over $\vw$ analytically due to the
conditionally Gaussian prior and the likelihood, giving the simplified
expression 
\begin{equation} \label{eq:evidence2}
 p(\vy|X,\vtheta) = \int p(\vy|X,\vu,\sigma^2,\delta)\, p(\vu|\vtheta_{drd})\,
 d\vu,
\end{equation}
where the conditional evidence given $\vu$ is a normal density
evaluated at $\vy$,
\begin{equation} \label{eq:condev}
p(\vy|X,\vu,\vtheta) = \Nrm(\vy| \mathbf{0}, XCX\trp + \sigma^2 I).
\end{equation}
However, the integral over $\vu$
has no analytic form. We therefore resort to the Laplace's method to
approximate this integral. 

\subsubsection{Laplace approximation}

Laplace's method provides a technique for approximating intractable
integrals using a second-order Taylor expansion in $\vu$ of the log of the
integrand in (eq.~\ref{eq:evidence2}). This method is equivalent to
approximating the posterior over $\vu$ given $\vtheta$ by a Gaussian
centered on its mode (\cite{MacKayBook}, chap.~27). The exact posterior is given by Bayes' rule:
\begin{equation} \label{eq:upost}
p(\vu|X,\vy,\vtheta) = \frac{1}{Z}\, p(\vy|X,\vu,\sigma^2,\delta)p(\vu|\vtheta_{drd}), 
\end{equation}
where the normalizing constant, $Z = p(\vy|X,\vtheta)$, is the
marginal likelihood we wish to compute. The Gaussian approximation to
the posterior is
\begin{equation} \label{eq:laplapprox}
p(\vu|X,\vy,\vtheta) \approx \Nrm(\vm_{\vu}, \Lambda_\vu),
\end{equation}
where $\vm_{\vu}$ is the posterior mode and $\Lambda_\vu$ is a local
approximation to the posterior covariance. Substituting this
approximation into (eq.~\ref{eq:upost}), we can directly solve for $Z$:
\begin{equation} \label{eq:laplev}
Z \approx \frac{p(\vy|X,\vu,\sigma^2,\delta)p(\vu|\vtheta_{drd})}{\Nrm(\vm_{\vu}, \Lambda_\vu)}.
\end{equation}
The right-hand-side of this expression can be evaluated at any $\vu$,
but it is conventional to use the mode, $\vu = \vm_{\vu}$, given that
this is where the approximation is most accurate.

% \begin{equation}
% \vm_{\vu} = \arg\max_\vu\; \log p(\vu|X,\vy,\vtheta) 

To compute the Laplace approximation, we first numerically optimize the
log of the posterior (eq.~\ref{eq:upost}) to find its mode:
\begin{equation} \label{eq:mu}
\vm_{\vu} % = \arg\max_\vu\; \log p(\vu|X,\vy,\vtheta) 
= \arg\max_\vu \; \Big[ \log p(\vy|X,\vu,\nsevar,\delta) + \log p(\vu|\vtheta_{drd}) 
\Big],
\end{equation}
where the first term is the log of the conditional evidence given $\vu$
(eq.~\ref{eq:condev}),
\begin{align}\label{eq:datalike1}
 \log p(\vy|X,\vu,\nsevar,\delta) &= -\frac{1}{2}\log|XCX^\top +\nsevar
 I|-\frac{1}{2}\vy^\top(XCX^\top +\nsevar I)^{-1}\vy + const,
\end{align}
 and the second is the log of the GP prior for $\vu$,
 \begin{align}\label{eq:datalike2}
 \log p(\vu|\vtheta_{drd}) &= -\frac{1}{2}(\vu-b\vones)^\top K^{-1}(\vu-b\vones)-\frac{1}{2}\log |K|+const.
\end{align}
We use quasi-Newton methods to optimize this objective function because
the fixed point methods developed for ARD (e.g., \cite{mackay1992Bayesian,tipping2003fast}), which operate on one
element of the prior precision vector at a time, are inefficient due to the strong
dependencies induced by the GP prior. However, because this
high-dimensional optimization problem is non-convex, we also formulate a novel
approach for optimizing $\vu$ using a two-stage convex relaxation
inspired by \cite{Wipf08}. We will present the method in Sec. 4.1.2.

%\begin{align}\label{eq:datalike}
% \log p(\vy|X,\vu,\nsevar,\delta) &= -\frac{1}{2}\log|XCX^\top +\nsevar
% I|-\frac{1}{2}\vy^\top(XCX^\top +\nsevar I)^{-1}\vy + const \\
% \intertext{
% and the log of the GP prior for $\vu$ is } 
% \log p(\vu|\vtheta_{drd}) &= -\frac{1}{2}(\vu-b\vones)^\top K^{-1}(\vu-b\vones)-\frac{1}{2}\log |K|+const.
%\end{align}
%The resulting vector $\vm_{\mu}$ is equal to the mode of the posterior
%over $\vu$ given $\vtheta$.

Given the mode of the log-posterior $\vm_{\vu}$, the second step to
computing the Laplace-based approximation to the marginal likelihood
is to compute the Hessian (2nd derivative matrix) of the
log-posterior at $\vm_{\vu}$. The negative inverse of the Hessian gives us the posterior covariance
for the Laplace approximation (eq.~\ref{eq:laplapprox}):
\begin{equation} \label{eq:Lam}
 \Lambda_{\vu} = \left(-\frac{\partial^2}{\partial \vu \partial
 \vu^\top} \Big[ \log p(\vy| X,\vu,\nsevar,\delta) + \log p(\vu| \vtheta_{drd}) \Big] \right)^{-1}.
\end{equation}
See Appendix \ref{hess} for the explicit derivation of Hessian for the DRD model. %\jpcom{Anqi: to add!}

% . These two steps result in the
% following Gaussian approximation to the posterior:
% \begin{equation}
% p(\vu|X,\vy,\sigma^2,\vtheta) \approx q(\vu) \triangleq \Nrm(\vm_{\vu},
% \Lambda_{\vu}).
% \end{equation}
% To obtain the marginal likelihood using this ap

% From the approximate posterior, Laplace's method can be seen as
% rearranging the terms in Bayes rule in order to write the marginal
% likelihood as the likelihood times prior divided by the posterior:

% Given the Laplace approximation for $p(\vu|\vy, X, \vtheta)$, we can
% further derive $p(\vy|X,\vtheta)$ based on
% eq. (\ref{bayes_theta}). Although the right-hand-side in
% eq. (\ref{bayes_theta}) can be evaluated at any value of $\vu$, a
% common approach is to use the mode $\vm_{\vu}$ of
% $p(\vu|X,\vy,\vtheta)$, since this is where the Laplace approximation
% is most accurate. 

Given these ingredients, we can now write down the 
approximation to the log marginal likelihood (eq.~\ref{eq:laplev}):
\begin{equation} 
 \log p(\vy|X, \vtheta) \approx 
% \log p(\vy|X, \vtheta,\vm_{\vu},\Lambda_{\vu})\\ &= 
\log p(\vy|X, \vm_{\vu},\nsevar,\delta) 
 + \log p(\vm_{\vu}|\vtheta_{drd}) + \tfrac{1}{2} \log |\Lambda_{\vu}|
 +const, \label{eq:lm}
\end{equation}
where the first term is simply the log conditional evidence
(eq.~\ref{eq:datalike1}) with prior covariance $C$ evaluated at
$\vm_\vu$.

It is this log-marginal likelihood that we seek to optimize in order
to learn hyperparameters $\vtheta$. The key difficulty is that the
Laplace approximation parameters $\vm_{\vu}$ and $\Lambda_{\vu}$
depend implicitly on $\vtheta$ (since $\vm_{\vu}$ is determined by
numerical optimization at a fixed value of $\vtheta$), making it
impractical to evaluate their derivatives with respect to $\vtheta$.
To address this problem, we introduce a method for partially
decoupling the Laplace approximation from the hyperparameters
(Sec. 4.1.3).

\subsubsection{A two-stage convex relaxation to Laplace Approximation}
% \section{A two-stage convex relaxation for Laplace approximation}
The optimization for $\vm_\vu$ (eq.~\ref{eq:mu}), the mode of the
posterior over the latent vector $\vu$, is a critical step for
computing the Laplace approximation. However, the negative
log-posterior is a non-convex function in $\vu$, meaning that there is
no guarantee of obtaining the global minimum. In this section, DRD resembles
the original ARD model. Neither of the two most popular optimization
methods for ARD, MacKay's fixed-point method
\citep{mackay1992Bayesian} and Tipping and Faul's fast-ARD
\citep{tipping2003fast}, are guaranteed to converge to a local minimum
or even a fixed point of the log-posterior.

% We can consider the model as an extension of ARD with a GP prior over
% $\vu=-\log \mathbf{\alpha}$, where $\alpha$ is the inverse of the
% diagonal vector of $C$. However, due to the dependencies among $\vu$,
% it's impractical to implement MacKay's fixed point method
% \citep{mackay1992Bayesian}) or Tipping's fast ARD method
% , which are not guaranteed to converge to a
% local minimum or even a saddle point of $\mathcal{L}(\vu)$.

% \begin{equation}\label{reframe}
% \begin{aligned}
% \mathcal{L}(\vu)=-\log p(\vy, C|X,\nsevar) &=\frac{1}{2}\log|XCX^\top +\nsevar I|+\frac{1}{2}\vy^\top(XCX^\top +\nsevar I)^{-1}\vy\\
% &+\frac{1}{2}(\vu-b\vones)^\top K^{-1}(\vu-b\vones)+\frac{1}{2}\log
% |K|+const
% \end{aligned}
% \end{equation} 
% where $C$ is a function of $\vu$.

In this section, we introduce an alternative formulation of the cost
function in (eq.~\ref{eq:mu}) %(\ref{reframe})
using an auxiliary function: this provides a tight convex upper bound
that can be optimized more easily. The technique is similar to the
iterative re-weighted $l_1$ formulation of ARD in \cite{Wipf08}.

Let $\mathcal{L}(\vu)$ denote the sum of terms in the negative log-posterior
(eq.~\ref{eq:mu}) that involve $\vu$,
\begin{equation} 
 \mathcal{L}(\vu) =\frac{1}{2}\log|XCX^\top +\nsevar I|+\frac{1}{2}\vy^\top(XCX^\top +\nsevar I)^{-1}\vy
 +\frac{1}{2}(\vu-b\vones)^\top K^{-1}(\vu-b\vones),
%+\frac{1}{2}\log
 % |K|
 \end{equation} 
 where $C=\diag(e^\vu)$. We denote the three terms it contains as:
 \begin{eqnarray}
 \mathcal{L}_1(\vu) &=&\frac{1}{2} \log |X \diag(e^\vu)X^\top+\nsevar I|\\
 \mathcal{L}_2(\vu) &=&\frac{1}{2} \vy^\top (X\diag(e^\vu)X^\top+\nsevar I)^{-1}\vy\\
 \mathcal{L}_3(\vu) &=& \frac{1}{2}(\vu-b\vones)^\top K^{-1}(\vu-b\vones).
\end{eqnarray} 
Here $\mathcal{L}_1(\vu)$ and $\mathcal{L}_3(\vu)$ are both convex in
 $\vu$ (see proof in Appendix \ref{conv}). We can derive a tight convex upper bound for
 $\mathcal{L}_2(\vu)$, thus providing a tight convex upper bound for $\mathcal{L}(\vu)$.

 We know that $\mathcal{L}_2(\vu)$ is non-convex, but we are
 interested in rewriting it using concave duality. Let
 $\vh(\vu): \mathbb{R}^p\rightarrow \Omega \subset \mathbb{R}^p$ be a
 mapping with range $\Omega$, which may or may not be a one-to-one
 map. We assume that there exists a concave function $\Phi(\veta):\Omega\rightarrow\mathbb{R}, \forall \veta\in \Omega$, such that $\mathcal{L}_2(\vu)=\Phi(\vh(\vu))$ holds. To exploit this technique, we first rewrite $\mathcal{L}_2$
 using the
 matrix inverse lemma \citep{higham2002accuracy} as:
\begin{eqnarray}
\mathcal{L}_2(\vu) &=& \frac{1}{2\nsevar} \vy^\top\vy-\frac{1}{2\sigma^4}\vy^\top X \left (\frac{1}{\nsevar}X^\top X+\diag(e^{-\vu})\right)^{-1}X^\top\vy.
\end{eqnarray} 
Then, setting $\vh(\vu)=e^{-\vu}$, which is convex in $\vu$, we have
\begin{eqnarray}
\mathcal{L}_2(\vu)=\Phi(\vh(\vu))=\frac{1}{2\nsevar} \vy^\top\vy-\frac{1}{2\sigma^4}\vy^\top X\left(\frac{1}{\nsevar}X^\top X+\diag(\vh(\vu))\right)^{-1}X^\top\vy.
\end{eqnarray} 
This expression is concave in $\vh(\vu)$ (inverse of a matrix is convex), and thus can be expressed as a minimum over upper-bounding hyperplanes via
\begin{eqnarray}
\mathcal{L}_2(\vu) =\Phi(\vh(\vu))=\mbox{inf}_{\vz\in \mathbb{R}^p}\left[\vz^\top \vh(\vu)-\mathcal{L}_\vh^*(\vz)\right],
\end{eqnarray} 
where $\mathcal{L}_\vh^*(\vz)$ is the concave conjugate of $\Phi(\veta)$ that is defined by the duality relationship
\begin{eqnarray}\label{concavebound}
\mathcal{L}_\vh^*(\vz) =\mbox{inf}_{\veta\in
 \mathbb{R}^p}\left[\vz^\top
 \veta-\Phi(\veta)\right], 
\end{eqnarray} 
and $\vz$ is the dual variable. Note, however, that for our purpose it
is not necessary to ever explicitly compute
$\mathcal{L}_\vh^*(\vz)$. This leads to the following upper-bounding
auxiliary cost function
\begin{eqnarray}\label{auxbound}
\Phi(\vh(\vu),\vz) =\vz^\top\vh(\vu)-\mathcal{L}_\vh^*(\vz) \ge \Phi(\vh(\vu)).
\end{eqnarray} 
Thus, it naturally admits the tight convex upper bound for $\mathcal{L(\vu)}$,
\begin{eqnarray}\label{ub}
\mathcal{L}(\vu,\vz) \overset{\Delta}{=} \vz^\top\vh(\vu)-\mathcal{L}_\vh^*(\vz)+\mathcal{L}_1(\vu)+\mathcal{L}_3(\vu)\ge \mathcal{L(\vu)}. 
\end{eqnarray} 
Moreover, for any fixed $\veta=\vh(\vu)$, it's well-known that the
minimum of the right hand side of (eq.~\ref{concavebound}) is achieved
at
\begin{eqnarray}\label{zopt}
\hat{\vz}=\nabla_{\veta}\Phi(\veta)|_{\veta=\vh(\vu)}.
\end{eqnarray} 
This leads to the general optimization procedure presented in
Algorithm \ref{tab1}. By repeatedly refining the dual parameter $\vz$,
we can obtain a repeatedly improved convex relaxation, leading to a
solution superior to that of the initial convex relaxation.
\begin{algorithm}[!t]
\caption{A two-stage convex relaxation method for DRD Laplace approximation}
\centering
\begin{algorithmic} 
\STATE\textbf{Input:} $X,\vy,\vtheta=\{\nsevar,\delta,b,\rho,l\}$
\STATE\textbf{Output:} $\hat{\vu}$
\STATE initialize dual variable $\hat{\vz}_i$ = 1, $\forall i=1,2,...,p$
\STATE Repeat the following two steps until convergence:
\STATE 1. Fix $\hat{\vz}$, let $\hat{\vu} = \mbox{argmin}_{\vu\in \mathbb{R}^p}\;\left[ \vz^\top\vh(\vu)+\mathcal{L}_1(\vu)+\mathcal{L}_3(\vu)\right]$ in (eq.~\ref{ub})
\STATE 2. Fix $\hat{\vu}$, let $\hat{\vz}=\nabla_{\veta}\Phi(\veta)|_{\veta=\vh(\hat{\vu})}$ in (eq.~\ref{zopt})
\end{algorithmic}\label{tab1}
\end{algorithm}
%In practice, it's impossible to achieve $\vu=-\infty$ to get $e^{\vu}=0$, but we need to align the zero covariance $\diag(C)=0$ with $\vu=b\vones$ for soft thresholding, thus the nonlinear transform is chosen as $C=\diag(e^\vu-e^b)$ instead of eq. (\ref{exp_non}). However, a practical issue exists when $\vh(\vu)=\diag(\frac{1}{e^\vu-e^b})$ has a large value that causes numerical instability when $\vu_i=b$ for any $i$. Since $\frac{1}{e^\vu}$ is relatively stabler, let $M=\nsevar I-X\diag(e^{b\vones})X^\top$, we rewrite $\mathcal{L}_2$ as
%\begin{eqnarray}
%\mathcal{L}_2 &=&\vy^\top(X\diag(e^\vu)X^\top+\nsevar I-X\diag(e^{b\vones})X^\top)^{-1}\vy \\
%&=&\vy^\top(X\diag(e^\vu)X^\top+M)^{-1}\vy \\
%&=&\vy^\top M^{-1}\vy-\vy^\top M^{-1}X \left(X^\top M^{-1}X+\diag(e^{-\vu})\right)^{-1}X^\top M^{-1}\vy 
%\end{eqnarray} 
%Then derive the upper bound for the above $\mathcal{L}_2$.

%\anqicom{Sanmi suggests to add more details to the proof.}

Now we show the analysis of global convergence. According to the Zangwill's \textsl{Global Convergence Theorem} \citep{zangwill1969nonlinear}, let
$\mathcal{A}(\cdot): \mathcal{U}\rightarrow \mathcal{P}(\mathcal{U})$ be a point-to-set mapping to handle the
multi-global minima case, which satisfies Steps 1 and 2 of the
proposed algorithm, then

\begin{thm}
From any initialization point
$\vu^0 \in \mathbb{R}^p$, the sequence of parameter estimates
$\{\vu^k\}$ generated via $\vu^{k+1} \in \mathcal{A}(\vu^k)$ is
guaranteed to converge monotonically to a local minimum (or saddle
point) of $\mathcal{L}(\vu)$. 
\end{thm}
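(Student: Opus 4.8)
The plan is to read Algorithm~\ref{tab1} as a majorization--minimization iteration and to verify the hypotheses of Zangwill's Global Convergence Theorem with the objective $\mathcal{L}(\vu)$ itself serving as the descent function. Concretely, I take the solution set to be $\Gamma=\{\vu\in\RR^p:\nabla\mathcal{L}(\vu)=0\}$, i.e.\ the stationary points (local minima or saddle points), let $\mathcal{A}$ be the point-to-set map sending $\vu^k$ to the admissible values of $\vu^{k+1}$, and check three things: (i) the iterates stay in a fixed compact set; (ii) $\mathcal{L}$ is a descent function for $\mathcal{A}$ --- strictly decreasing outside $\Gamma$ and non-increasing on $\Gamma$; and (iii) $\mathcal{A}$ is a closed map on $\RR^p\setminus\Gamma$. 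Granting (i)--(iii), Zangwill's theorem gives that $\{\mathcal{L}(\vu^k)\}$ converges monotonically and every limit point of $\{\vu^k\}$ lies in $\Gamma$, which is exactly the claim.

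For (i), since $XCX^\top$ is positive semidefinite every eigenvalue of $XCX^\top+\nsevar I$ is at least $\nsevar$, so $\mathcal{L}_1(\vu)\ge\tfrac{p}{2}\log\nsevar$ and $\mathcal{L}_2(\vu)\ge 0$, while $\mathcal{L}_3(\vu)=\tfrac12(\vu-b\vones)^\top K^{-1}(\vu-b\vones)$ is a positive-definite quadratic and hence coercive; thus $\mathcal{L}$ is continuous and coercive, its sublevel set $S_0=\{\vu:\mathcal{L}(\vu)\le\mathcal{L}(\vu^0)\}$ is compact, and the descent property keeps every $\vu^k$ in $S_0$. For (ii), just before Step~1 of iteration $k$ the dual update (eq.~\ref{zopt}) sets $\hat\vz^k=\nabla_\veta\Phi(\veta)|_{\veta=\vh(\vu^k)}$, so by (eq.~\ref{auxbound}) the surrogate is tight at $\vu^k$: $\mathcal{L}(\vu^k,\hat\vz^k)=\mathcal{L}(\vu^k)$. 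Step~1 returns $\vu^{k+1}\in\arg\min_\vu\mathcal{L}(\vu,\hat\vz^k)$, and combining with the global upper bound (eq.~\ref{ub}) gives $\mathcal{L}(\vu^{k+1})\le\mathcal{L}(\vu^{k+1},\hat\vz^k)\le\mathcal{L}(\vu^k,\hat\vz^k)=\mathcal{L}(\vu^k)$. To upgrade monotone descent to strict descent off $\Gamma$, I use the gradient-matching identity $\nabla_\vu\mathcal{L}(\vu,\hat\vz^k)\big|_{\vu=\vu^k}=\nabla\mathcal{L}(\vu^k)$: this holds because $\vh(\vu)=e^{-\vu}$ acts coordinatewise and $\hat\vz^k=\nabla\Phi(\vh(\vu^k))$ forces the linear term $\hat\vz^{k\top}\vh(\vu)$ to agree with $\mathcal{L}_2(\vu)=\Phi(\vh(\vu))$ to first order at $\vu^k$, while $\mathcal{L}_1,\mathcal{L}_3$ are common to $\mathcal{L}$ and the surrogate. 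Hence if $\mathcal{L}(\vu^{k+1})=\mathcal{L}(\vu^k)$ then $\vu^k$ minimizes the convex surrogate $\mathcal{L}(\cdot,\hat\vz^k)$, so its gradient vanishes there and therefore $\nabla\mathcal{L}(\vu^k)=0$, i.e.\ $\vu^k\in\Gamma$; on $\Gamma$ monotonicity already supplies the required inequality.

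For (iii), the map $\mathcal{A}$ factors as $\vu\mapsto\hat\vz(\vu)=\nabla_\veta\Phi(\vh(\vu))$ followed by $\vz\mapsto\arg\min_\vu\big[\vz^\top\vh(\vu)+\mathcal{L}_1(\vu)+\mathcal{L}_3(\vu)\big]$. The first factor is continuous, since $\vh(\vu)=e^{-\vu}$ is smooth and $\Phi(\veta)=\tfrac{1}{2\nsevar}\vy^\top\vy-\tfrac{1}{2\sigma^4}\vy^\top X(\tfrac1\nsevar X^\top X+\diag(\veta))^{-1}X^\top\vy$ has continuous gradient on $\Omega$; the second factor is the $\arg\min$ of a function jointly continuous in $(\vz,\vu)$ that is convex and coercive in $\vu$, so by the standard parametric-minimization closedness result (Berge's maximum theorem, or Zangwill's point-to-set lemma) it is a closed map, and since $\mathcal{L}_3$ is strictly convex the inner minimizer is unique, making $\mathcal{A}$ a continuous point-to-point map (hence closed everywhere). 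Composing with the invocation of Zangwill's theorem then finishes the argument; the point-to-set formulation $\mathcal{A}(\cdot):\mathcal{U}\to\mathcal{P}(\mathcal{U})$ in the statement is what lets us also cover the case where $\mathcal{L}$ has several global minimizers.

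The main obstacle is the descent step (ii): one must establish the gradient-matching identity so that a point at which the iteration stops decreasing is genuinely a stationary point of $\mathcal{L}$ --- this is precisely what makes $\Gamma$ the correct solution set and produces the strict decrease Zangwill requires. It rests on the tangency of the concave-duality bound at $\veta=\vh(\vu)$ (eq.~\ref{zopt}) and on the convexity of $\mathcal{L}_1$ and $\mathcal{L}_3$ proved in Appendix~\ref{conv}; the compactness argument is then needed only to rule out escape to infinity, and closedness is essentially bookkeeping once uniqueness of the inner minimizer is noted.
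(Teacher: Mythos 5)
Your argument is correct and rests on the same skeleton as the paper's proof: both apply Zangwill's Global Convergence Theorem to the map $\mathcal{A}$ defined by minimizing the tangent convex upper bound (eq.~\ref{ub}), with $\mathcal{L}$ itself as the descent function. The differences lie in how the three hypotheses are verified, and yours are in places tighter. For the descent condition, the paper asserts that the auxiliary cost is ``strictly tangent'' to $\mathcal{L}$ at $\vu^k$ and directly writes $\mathcal{F}(\vu^k)>\mathcal{F}(\vu^{k+1})$ off the solution set; you justify exactly this via the gradient-matching identity $\nabla_\vu\mathcal{L}(\vu,\hat{\vz}^k)\big|_{\vu=\vu^k}=\nabla\mathcal{L}(\vu^k)$, so that a stalled iteration forces $\nabla\mathcal{L}(\vu^k)=0$; this also makes explicit that $\Gamma$ is the set of stationary points, which the paper leaves implicit. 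For compactness, the paper proves coercivity of the surrogate $\mathcal{F}$ (Appendix~\ref{bound}) and then argues through unions of compact sets, whereas you obtain it from coercivity of $\mathcal{L}$ plus monotone descent, confining the iterates to the sublevel set of $\vu^0$ --- a cleaner route. For closedness, the paper cites Proposition~7 of \cite{gunawardana2005convergence}, while you use continuity of $\vu\mapsto\hat{\vz}(\vu)$ and uniqueness of the inner minimizer to get a continuous point-to-point map. Three small repairs would make your write-up airtight: the bound on $\mathcal{L}_1$ should read $\tfrac{n}{2}\log\nsevar$ (the determinant is of an $n\times n$ matrix), though only boundedness below is needed; coercivity and strict convexity of $\mathcal{L}_3$ require $K\succ 0$, which holds for the squared-exponential kernel at distinct locations but should be stated; and coercivity of the inner problem in $\vu$ uses $\hat{\vz}\ge 0$ componentwise, which is true since $\partial\Phi/\partial\eta_i=\tfrac{1}{2\sigma^4}\bigl(e_i\trp(\tfrac{1}{\nsevar}X\trp X+\diag(\veta))^{-1}X\trp\vy\bigr)^2\ge 0$, but is worth making explicit since $\vz\trp e^{-\vu}$ would destroy coercivity if some $z_i<0$.
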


\begin{proof}
Let $\Gamma\in\mathcal{U}$ be a solution set. In order to use the global convergence theorem, we need to show that \\
1) all points $\{\vu^k\}$ are contained in a compact set $S \in\mathcal{U}$, where $\mathcal{U}$ is $\mathbb{R}^p$ in our problem; \\
2) there is a continuous function $Z$ on $\mathcal{U}$ such that\\
(a) if $x\not\in\Gamma$, then $Z(y) < Z(x)$ for all $y \in \mathcal{A}(x)$;\\
(b) if $x\in\Gamma$, then $Z(y) \le Z(x)$ for all $y \in \mathcal{A}(x)$;\\
3) the mapping $\mathcal{A}$ is closed at points outside $\Gamma$.

First, let's define the mapping $\mathcal{A}$ to be achieved by 
\begin{eqnarray}\label{mapA}
\vu^{k+1} \in \mathcal{A}(\vu^k)= \mbox{argmin}_{\vu\in \mathbb{R}^p}\;\mathcal{F}(\vu,\vu^k)=\mbox{argmin}_{\vu\in \mathbb{R}^p}\;{\vz^k}^\top\vh(\vu)-\mathcal{L}_\vh^*(\vz)+\mathcal{L}_1(\vu)+\mathcal{L}_3(\vu),
\end{eqnarray}
where $\vz^k=\nabla_{\veta}\Phi(\veta)|_{\veta=\vh(\vu^k)}$. 
We can prove that $\mathcal{F}$ is coercive, i.e., when $||\vu||\rightarrow\infty$, we have $\mathcal{F}(\vu) \rightarrow \infty$ (proof in Appendix \ref{bound}). Therefore, the solution set of $\mathcal{F}(\vu)$ is bounded and nonempty. Accordingly, $\mathcal{A}(\vu)$ is nonempty. Using Proposition 7 in \citep{gunawardana2005convergence}, we can further show that the point-to-set mapping $\mathcal{A}$ is closed at $\vu\in\mathcal{U}$. Condition 3 is satisfied. 

For each $\vu^{k}$, $\vu^{k+1}$ is the solution of $\mathcal{F}(\vu)$, and $\mathcal{A}(\vu)$ is a closed mapping; therefore each $\vu^{k+1}$ belongs to a compact set. We know that the union of two compact sets is compact. Therefore, all points $\{\vu^k\}$ are contained in a compact set $S \in\mathcal{U}$. Condition 1 is satisfied. 

To prove condition 2, we must show that for any $\vu^k$, $\mathcal{L}(\vu^{k+1})<\mathcal{L}(\vu^{k})$ for all $\vu^{k+1}\in\mathcal{A}(\vu^k)$ if $\vu^k\not\in\Gamma$; $\mathcal{L}(\vu^{k+1})\le\mathcal{L}(\vu^{k})$ for all $\vu^{k+1}\in\mathcal{A}(\vu^k)$ if $\vu^k\in\Gamma$. At any $\vu^k$, the auxiliary cost function $\mathcal{F}(\vu)$ (eq.~\ref{mapA}) is strictly tangent to $\mathcal{L}(\vu)$ at $\vu^k$. Therefore, if $\vu^k\not\in\Gamma$, $\mathcal{L}(\vu^k)=\mathcal{F}(\vu^k)>\mathcal{F}(\vu^{k+1})\ge\mathcal{L}(\vu^{k+1})$, thus $\mathcal{L}(\vu^k)>\mathcal{L}(\vu^{k+1})$; if $\vu^k\in\Gamma$, $\mathcal{L}(\vu^k)=\mathcal{F}(\vu^k)=\mathcal{F}(\vu^{k+1})\ge\mathcal{L}(\vu^{k+1})$, thus $\mathcal{L}(\vu^k)\ge\mathcal{L}(\vu^{k+1})$. Condition 2 is satisfied.
\end{proof}
The algorithm could theoretically converge to a saddle point, but any minimal perturbation would easily lead to escape.

\subsubsection{Decoupled Laplace approximation}

% The first term in (eq.~\ref{eq:lm}),
% $\log p(\vy|X, \vm_{\vu},\nsevar,\delta)$, provides the data
% likelihood information for $\nsevar$ and $\delta$, and the third
% term $\tfrac{1}{2} \log |\Lambda_{\vu}|$ is a constant. Although the
% second term $\log p(\vm_{\vu}|\vtheta_{drd})$ is informative about
% $\vtheta_{drd}$, $\vtheta_{drd}$ is highly associated with the
% curvature restricted by the current mode and covariance and thus
% obtains a very small amount of freedom to move. 

To optimize the marginal likelihood for the DRD hyperparameters
(eq.~\ref{eq:lm}), we should ideally replace $\vm_\vu$ and
$\Lambda_\vu$ with explicit expressions in $\vtheta$ in order to
accurately compute derivatives with respect to $\vtheta$. However, the
deterministic formulation of such functions is intractable. We can
nevertheless partially overcome this dependence by introducing a
``decoupled'' Laplace approximation that takes account of the
dependence of $\Lambda_\vu$ on the hyperparameters $\vtheta_{drd}$. \cite{wu2017gaussian} also proposed a conceptually similar decoupled Laplace approximation.

Specifically, we rewrite the inverse Laplace posterior covariance
(eq.~\ref{eq:Lam}):
\begin{equation} \label{eq:Lamde}
 \Lambda_{\vu} = \left(\Gamma+\Psi(\vtheta_{drd}) \right)\inv
\end{equation}
% \nonumber
%\nonumber\\
% \end{eqnarray}
where $\Gamma$ is the negative Hessian of the log-likelihood (which is
independent of $\vtheta_{drd}$), 
\begin{equation}
 \label{eq:Gam}
 \Gamma=-\frac{\partial^2}{\partial \vu \partial \vu^\top} \log p(\vy|X,\vu,\nsevar,\delta),
\end{equation}
 and $\Psi(\vtheta_{drd})$ is the
precision matrix of the prior distribution for $\vu$,
\begin{equation}
 \label{eq:Psi}
 \Psi(\vtheta_{drd})=-\frac{\partial^2}{\partial \vu \partial
 \vu^\top} \log p(\vu|\vtheta_{drd}) = K\inv,
\end{equation}
which is the inverse of the GP prior covariance governing $\vu$
(eq.~\ref{eq:K}).
% We therefore propose a ``decoupled''
% Laplace approximation method to dissociate the hyperparameters
% $\vtheta_{drd}$ from the posterior covariance $\Lambda_\vu$, so that
% optimizing $\vtheta_{drd}$ also benefits from the second order
% curvature of the GP prior, which allows more freedom for
% $\vtheta_{drd}$ to change locally.
Substituting for $\Lambda_\vu$ in (eq. \ref{eq:lm}), this gives:
 \begin{eqnarray} \label{eq:delapl}
 \log p(\vy|X, \vtheta) \approx \log p(\vy|X, \vm_{\vu},\nsevar,\delta) 
 + \log p(\vm_{\vu}|\vtheta_{drd})- \tfrac{1}{2} \log | \Gamma+K\inv|+const .
\end{eqnarray}
This form decomposes the curvature at the posterior mode into the
likelihood curvature and the prior curvature. In this way, the posterior
curvature tracks the influence of the change in the prior curvature as we
optimize the hyperparameters $\vtheta$, while keeping the influence of
the likelihood curvature fixed. This decoupling allows us to update
the posterior without recomputing the Hessian. It will be accurate so
long as the Hessian of the likelihood changes slowly over local
regions in parameter space.

To optimize hyperparameters under the decoupled Laplace approximation,
we fix $\vm_\vu$ and $\Gamma$ using the current mode of the posterior,
and optimize (eq.~\ref{eq:delapl}) directly for $\vtheta$,
incorporating the dependence of $K$ on $\vtheta_{drd}$. With this
approach, the first term, $\log p(\vy|X, \vm_{\vu},\nsevar,\delta)$,
captures the dependence on $\nsevar$ and $\delta$; the second term,
$\log p(\vm_{\vu}|\vtheta_{drd})$, restricts $\vtheta_{drd}$ around
the current mode; and the third term
$- \tfrac{1}{2} \log | \Gamma+K\inv|$ pushes $\vtheta_{drd}$ along the
second order curvature given the GP kernel. This decoupling weakens
the strong dependency between $\vtheta_{drd}$ and $\vm_\vu$,
maintaining the accuracy of the Laplace approximation as we adjust
$\vtheta_{drd}$.

To ensure the accuracy of the Laplace approximation, in each iteration $t$,
we optimize eq.~(\ref{eq:delapl}) over a restricted region of the hyperparameter space
around the previous hyperparameter setting $\theta^{t-1}$, which allows varying within 20\% of its current value on each iteration in our experiments. This
prevents $\vtheta$ from moving too far from the region where the
current Laplace approximation ($\vm_\vu$ and $\Gamma$) is accurate.
Then, based on a new hyperparameter setting $\vtheta^t$, we update the
Laplace approximation parameters $\vm_\vu$ and $\Gamma$. This
procedure is summarized in Algorithm \ref{A1}. The algorithm stops
when $\{\vm_{\vu},\Gamma\}$ and $\vtheta$ converge. The empirical
Bayes estimate is then given by the MAP estimate of the weights
$\vw_{map}=\vmu_\vw$ (eq. \ref{eq:muw}) conditioned on the optimal
latents $\hat{\vu}=\vm_{\vu}$ and hyperparameters $\hat{\vtheta}$.

%%%%%%%%%%%%%%

% First, we write down the log likelihood of the join density of $\vy$ and $\vw$ given eq. (\ref{ll}) and (\ref{wprior}):
% \begin{eqnarray} \label{eq:yw}
% \log p(\vy,\vw|X, C,\nsevar)=-\frac{1}{2}\log|2\pi \nsevar I|-\frac{1}{2}\log |2\pi C|-\frac{1}{2\nsevar}(\vy-X\vw)^\top(\vy-X\vw)-\frac{1}{2}\vw^\top C^{-1}\vw
% \end{eqnarray} 
% Fixing $\vy$ (the observed values), this implies a normal posterior on $\vw$:

\begin{algorithm}[!t]
\caption{Evidence optimization using decoupled Laplace approximation}
\centering
\begin{algorithmic} 
\STATE\textbf{Input:} $X,\vy$
\STATE\textbf{Output:} latents $\hat{\vu}$, hyperparameters $\hat{\vtheta}=\{\hat{\nsevar},\hat{\delta},\hat{b},\hat{\rho},\hat{l}\}$.
\STATE At iteration $t$:
\STATE 1. Numerically optimize log-posterior for latents $\vm_{\vu}^{t}$
using (eq.~\ref{eq:mu}) or Algorithm \ref{tab1}.
\STATE 2. Compute $\Gamma^t$ using negative Hessian of the log
conditional evidence (eq.~\ref{eq:Gam}).
%, \Lambda_{\vu}^{t} = \arg\max_\vu\mbox{ }\log p(\vy,\vu|X,\hat{\vtheta}^{t-1})$ (eq. (\ref{lml}))
\STATE 3. Numerically optimize
$p(\vy|X,\vtheta,\vm_{\vu}^{t},\Gamma^{t})$
(eq. \ref{eq:delapl}) for $\vtheta^t$.
\STATE Repeat step 1, 2 and 3 until $\{\vm_{\vu},\Gamma\}$
and $\vtheta$ converge.
\end{algorithmic}\label{A1}
\end{algorithm}

\subsection{Fully Bayesian inference with MCMC}
An alternate approach to the empirical Bayesian inference procedure
described above is to perform fully Bayesian inference using Markov
Chain Monte Carlo (MCMC). Using sampling, we can compute the
integrals over $\vu$ and
$\vtheta$ in order to compute the posterior mean (Bayes' least squares
estimates) for $\vw$. The full posterior distribution over
$\vw$ can be written as
\begin{eqnarray}
p(\vw|X,\vy) &=& \iint p(\vw | X,\vy,\vu,\vtheta) p(\vu,\vtheta|X,\vy) \,
 d\vu\, d\vtheta \\
&=& \iint \Nrm(\vw | \vmu_\vw, \Lambda_\vw ) p(\vu,\vtheta|X,\vy) \,
 d\vu\, d\vtheta,
\end{eqnarray}
where mean $\vmu_\vw$ and covariance $\Lambda_\vw$ are functions of
$\vu$ and $\vtheta$ (eq.~\ref{eq:muw}). This suggests a Monte Carlo representation of
the posterior as
\begin{align}
p(\vw|X,\vy) &= \frac{1}{N} \sum_{i=1}^N \Nrm \left(\vw \; \Big|\; 
 \vmu_\vw(\vu^{(i)},\vtheta^{(i)}) ,
 \Lambda_\vw(\vu^{(i)},\vtheta^{(i)}) \right) \\
\vu^{(i)},\vtheta^{(i)} &\sim p(\vu,\vtheta|X,\vy),
\end{align}
where $i$ is the index of the samples and $N$ is the total number of samples.
We can use Gibbs sampling to alternately sample $\vu$ and
$\vtheta$ from their conditional distributions given the other. The
joint posterior distribution of $\vu$ and
$\vtheta$ has the following proportional relationship,
 \begin{eqnarray} \label{post_utheta}
p(\vu,\vtheta|X,\vy)\propto p(\vy|X,\vu,\nsevar,\delta)p(\vu|\vtheta_{drd})\mbox{Prior}(\vtheta),
\end{eqnarray}
 where $p(\vy|X,\vu,\nsevar,\delta)$ and $p(\vu|\vtheta_{drd})$ have
 the likelihoods given in (eq.~\ref{eq:datalike1}) and (eq.~\ref{eq:datalike2}), and $\mbox{Prior}(\vtheta)$ is the prior distribution for $\vtheta$.

 \textbf{Sampling latents $\vu | \vtheta$}\\
 The first phase of Gibbs sampling is to sample $\vu$ from the
 conditional distribution of $\vu$ given $\vtheta$,
 \begin{eqnarray} \label{post_utheta}
\vu|\vtheta \sim p(\vy|X,\vu,\nsevar,\delta)p(\vu|\vtheta_{drd}).
\end{eqnarray}
This is the product of a Gaussian process prior $p(\vu|\vtheta_{drd})$
and a likelihood function $p(\vy|X,\vu,\vtheta)$ that ties the latent
variables $\vu$ to the observed data. This setting meets the
requirements of elliptical slice sampling (ESS), a rejection-free MCMC
\citep{murray2009elliptical}. ESS generates random elliptical loci
using the Gaussian prior and then searches along these loci to find
acceptable points by evaluating the data likelihood. This method takes
into account strong dependencies imposed by GP covariance on the
elements of the vector $\vu$ to facilitates faster mixing. It also
requires no tuning parameters, unlike alternative samplers such as
Metropolis-Hastings or Hamiltonian Monte Carlo, but performs similarly
to the best possible performance of a related M-H scheme. To overcome
slow mixing that can result when the prior covariance is highly
elongated, we apply ESS to a whitened variable using a
reparametrization trick, discussed in more details in Sec. \ref{sec:wGP}.

% However, there're some limitations with plain ESS
% in our problem setting due to the elongated elliptical shape. We will
% further discuss about it and resolve it in the following section.

\textbf{Sampling hyperparameters $\vtheta|\vu$}\\
The conditional distribution for sampling $\vtheta$ given $\vu$ is
 \begin{eqnarray}
\vtheta|\vu\sim p(\vy|X,\vu,\nsevar,\delta)p(\vu|\vtheta_{drd})\mbox{Prior}(\vtheta),
\end{eqnarray}
where $\vtheta = \{\nsevar, b,\rho,l,\delta\}$ contains five
individual hyperparmaters. We therefore perform slice sampling for
each variable conditioned on the others. We use prior distributions
of the form:
\begin{eqnarray}\label{hyphyp}
\log(\nsevar)\sim \mathcal{N}(m_n, \sigma_n^2),\mbox{\quad}
b\sim \mathcal{N}(m_b, \sigma_b^2),\mbox{\quad}
\rho\sim {\Gamma}(a_\rho, b_\rho),\mbox{\quad}
l\sim {\Gamma}(a_l, b_l),\mbox{\quad}
\delta\sim{\Gamma}(a_{\delta}, b_{\delta}).
\end{eqnarray}
We put a Gaussian prior on the log of $\nsevar$ instead of $\nsevar$. We will provide the values for these priors in Sec. \ref{sec:synth} on synthetic experiments. To control the number of samples, we inspect burn-in of MCMC, e.g., the training error and the change of coefficient given the averaged coefficient samples. 

\subsection{ Whitening the GP prior using reparametrization}
\label{sec:wGP}

In both Laplace approximation and MCMC frameworks, the latent vector
$\vu$ depends on the product of the conditional evidence
$p(\vy|X,\vu,\nsevar,\delta)$ and the GP prior
$p(\vu|\vtheta_{drd})$. The GP prior (which is the primary difference
between our model and standard ARD) introduces strong dependencies
between $\vu$ and GP hyperparameters, resulting in a highly elliptical
joint distribution. Such distributions are often problematic for both
optimization and sampling. For example, if we are trying to perform
Gibbs sampling on $\vu$ and the GP length scale hyperparameter
$l$, and the prior is strong relative to the evidence term, the samples $\vu | l^{(i)}$ will have smoothness strongly determined by
$l^{(i)}$, and the samples $l | \vu^{(i)}$ will in turn be strongly
determined by the smoothness of the current sample $\vu^{(i)}$. In this case,
mixing will be slow, and Gibbs sampling will take a long time to
explore the full posterior over different values of $l$.

% When the data exhibits weaker information, the prior distribution is
% inclined to take the dominating influence. However, alternately
% learning $\vu$ and $\vtheta$ is sophisticated because they are
% strongly coupled, even with the decoupled Laplace approximation
% trick.

We can overcome this difficulty with a technique known as the
``reparametrization trick,'' which involves reparameterizing the model
so that the unknown variables are independent under the prior
\citep{murray2010slice}. If we have prior $P(\vu) = \mathcal{N}(b\vones,K)$, then 
$\vu$ can be described equivalently by a deterministic transformation of
a standard normal random variable $\vv$:
\begin{eqnarray}
\label{wp}
\vv\sim \mathcal{N}(0, I ),\mbox{\quad}
\vu=L\vv+b\vones,
\end{eqnarray}
where $K = LL\trp$ is the Cholesky factorization of prior covariance
$K$. 

This reparametrization simplifies Laplace-approximation-based
inference by allowing a change of variables in (eq.~\ref{eq:mu}) so
that we directly maximize
$p(\vy|X,\vv,\vtheta)\mathcal{N}(\vv|0,I)$ for $\vv$. This
optimization problem has better conditioning, and eliminates the
computational problem of computing $\vu \trp K\inv \vu$ in the log
prior, which is replaced by a simple ridge penalty of the form
$\vv\trp\vv$.

%the reparameterized variable 
%w.r.t $\vv$, which also saves the computation for the inversion of the
%covariance matrix $K$. 
%The optimization turns to a maximum likelihood
%estimation with a ridge regularization. 

For sampling-based inference, the reparametrization allows us to
improve mixing performance because the conditionals $\vv|\vtheta$ and
$\vtheta|\vv$ exhibit much weaker dependencies than $\vu|\vtheta$ and
$\vtheta|\vu$. Moreover, elliptical slice sampling for $\vv|\vtheta$ is
more efficient because it involves loci on a sphere instead of a
highly elongated ellipsoid.

\subsection{Fourier dual form}\label{fd}
 
A second trick for improving the computational performance of DRD is
to perform optimization of the latent variable $\vu$ (or $\vv$) in the
Fourier domain. When the GP prior induces a high degree of smoothness
in $\vu$, the prior covariance $K$ becomes approximately low rank,
meaning that it has a small number of non-negligible eigenvalues.
Because the covariance function (eq.~\ref{eq:K}) is shift-invariant,
the eigenspectrum of $K$ has a diagonal representation in the Fourier
domain, a consequence of Bochner's theorem \citep{Stein99,Lazaro10}.
We can exploit this representation to optimize $\tilde{\vu}$ (the
discrete Fourier transform of $\vu$) while ignoring Fourier components
above a certain high-frequency cutoff, where this cutoff depends on
the length scale $l$. This results in a lower-dimensional
optimization problem. Fourier-domain representation of the latent vector $\vu$ also
simplifies the application of the reparametrization trick described
above because the Cholesky factor $L$ is now a diagonal matrix that can be
computed analytically from the spectral density of the
squared-exponential prior.

To summarize the joint application of the reparametrization and Fourier dual
tricks in our model, they can be understood as allowing us to
draw samples $\vu \sim \mathcal{N}(b\vones,K)$ via the series of transformations:
\begin{alignat}{2}
\tilde \vv &\sim \Nrm(0,I), \qquad & & \textrm{\it whitened Fourier domain
 sample}\label{vt} \\
\tilde \vu &= L \tilde \vv + \tilde \vb, & &\textrm{\it transformed Fourier
 domain sample}\\
\vu &= B \tilde \vu, & &\textrm{\it inverse Fourier transform}
\end{alignat}
where $\tilde \vb$ is the discrete Fourier transform of $b \vones$, a
vector of zeros except for a single non-zero element carrying the DC
component, and $B$ is the truncated (tall skinny) discrete inverse
Fourier transform matrix mapping the low-frequency Fourier components
represented in $\tilde \vu$ to the space domain.

% we
% first generate a normal random variable $\vv$, then transform $\vv$
% into $\tilde{\vu}$ according to eq. (\ref{wp}), finally achieve $\vu$
% by inverse Fourier transforming $\tilde{\vu}$ back to the real domain.

Note that the smoothness on $\vu$, which controls the spatial scale of
dependent sparsity, is different from the smoothing prior used in
smooth-DRD to induce smoothness in the coefficients $\vw$, although
both can benefit from sparse Fourier-domain representation in cases
where the relevant length scale is large.

% induced in frequency domain. The
% former enables the smoothness on $\vu$ so as to enforce the dependency
% on $\vw$, whereas the latter is for the purpose of smoothness on
% $\vw$.

\section{Synthetic experiments}
\label{sec:synth}

%To illustrate the performance of DRD, we performed synthetic
%experiments with a variety of sparse and structured weight vectors.

\subsection{Simulated example with smooth and sparse weights}

To illustrate and give intuition for the performance of the DRD
estimator, we performed simulated experiments with a vector of regression
weights in a one-dimensional space. We sampled a $p=4000$ dimensional
weight vector $\vw$ from the smooth-DRD prior (see Fig.~1), with
hyperparameters GP mean $b=-8$, GP length scale $l=100$, GP marginal
variance $\rho=36$, smoothness length scale $\delta=50$, measurement
noise variance $\nsevar=5$. We then sampled $n=500$
responses $\vy = X\vw+\mathbf{\epsilon}$, where $X$ is an $n\times p$
design matrix with entries drawn i.i.d. from a standard normal
distribution, and noise $\mathbf{\epsilon} \sim \Nrm(0,5I)$.
% from a generative model with
% $p=4000$ dimensions and $n=500$ samples.
%
% Firstly to generate a
% Gaussian process, a covariance kernel matrix $K$ is built with a
% squared exponential kernel with the spatial locations of regression
% weights as inputs. Then a scalar $b$ is set as the mean function to
% determine the scale of prior covariance. Given the Gaussian process,
% we generate a multivariate vector $\vu$ via the Fourier dual form in
% Sec. \ref{fd}, and then take its exponential to obtain the diagonal of
% prior covariance $C_{drd}$ in space-time domain. To induce smoothness,
% eq.~(\ref{csf}) is introduced to achieve covariance $C_{smooth-DRD}$. Then a
% weight vector $\vw$ is sampled from a Gaussian distribution with zero
% mean and covariance $C_{smooth-DRD}$. Finally, we obtain the response $\vy$
% given the stimulus $\vx$ with $\vw$ plus a large Gaussian noise
% $\epsilon$. 

\begin{figure}[!t] % figure placement: here, top, bottom, or page
 \centering
 \includegraphics[width=0.9\textwidth]{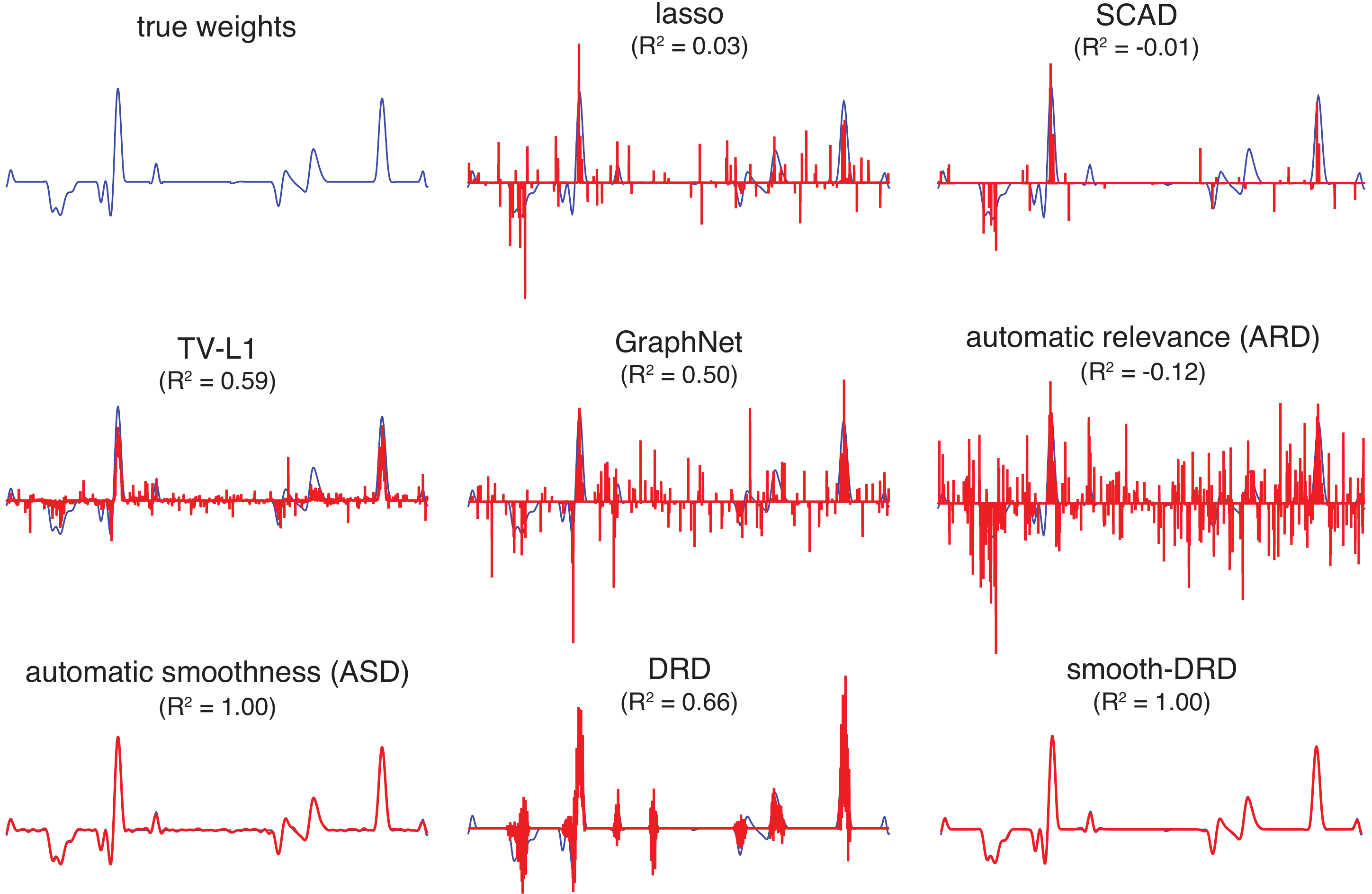} 
 \caption{ Example 4000-element weight vector $\vw$ sampled from
 the smooth-DRD prior (upper left), and estimates obtained from
 different methods on a simulated dataset with $n=500$ samples. The
 $R^2$ performance of each estimate in recovering $\vw$ is indicated
 above each plot. The bottom row shows our estimators: DRD-Laplace
 (bottom center) and smooth-DRD-Laplace (bottom right); the other
 DRD and smooth-DRD estimators (not shown) achieved similar
 performance.}
 \label{fig:ins}
\end{figure}

Fig. \ref{fig:ins} shows an example weight vector drawn from this
prior, along with estimates obtained from a variety of different
estimators:
\begin{itemize}
\item lasso \citep{tibshirani1996regression}, using Least Angle Regression (LARS) implemented
by glmnet\footnote{\url{https://web.stanford.edu/~hastie/glmnet/glmnet_alpha.html}};
\item Automatic Relevance Determination (ARD) \citep{neal1995Bayesian,
 mackay1992Bayesian}, implemented with the classic fixed point algorithm.
\item Automatic Smoothness Determination (ASD)
 \citep{sahani2003evidence}, which uses numerical optimization of
 marginal likelihood to learn the hyperparameters of a squared
 exponential kernel governing $\vw$.
\item Total Variation $l_1$ (TV-L1)
 \citep{michel2011total,baldassarre2012structured,gramfort2013identifying}, combining total variation penalty (also known as fused lasso),
 which imposes an $l_1$ penalty on the first-order differences of
 $\vw$, with a standard lasso penalty.
\item GraphNet \citep{grosenick2011family}, a
graph-constrained elastic net, developed for spatial and temporally
correlated data that yields interpretable model parameters by
incorporating sparse graph priors based on model smoothness or
connectivity, as well as a global sparsity inducing a prior that
automatically selects important variables.
\item Smoothly Clipped Absolute Deviation (SCAD) \citep{fan2001variable}, 
an estimator with non-convex sparsity penalty.

\end{itemize}
%
% Since the signal is smooth, we also included the ASD prior for 
% comparison to show the superiority of smooth-DRD with the structured 
% sparsity beyond the smoothness. 
%
% Total variation is a fused-lasso type of
% estimator. It favors solutions that have constant values in contiguous
% regions, thus encourage dependencies and structures. 
%We computed total variation $l_1$ (TV-L1) and graph net (GraphNet)
%estimates using the
%Nilearn\footnote{\url{http://nilearn.github.io/index.html}}
%package. \jpcom{Needs fixing:} We used cross-validation to set
%hyperparameters in all the above methods (except for ARD and ASD,
%which use evidence optimization). For DRD models, we employed evidence
%optimization for Laplace approximation and sampling for MCMC given
%hyper-priors over hyperparameters.

We computed total variation $l_1$ (TV-L1) and graph net (GraphNet)
estimates using the
Nilearn\footnote{\url{http://nilearn.github.io/index.html}} package. SCAD was implemented by SparseReg\footnote{\url{https://github.com/Hua-Zhou/SparseReg}}
\citep{zhou2017sparse}. For lasso, GraphNet, TV-L1 and SCAD, we used cross-validation to set
hyperparameters, whereas ARD and ASD used evidence optimization to
automatically set hyperparameters. For the DRD estimators, we used
evidence optimization to set hyperparameters for Laplace-approximation
based estimates and used sampling to integrate over hyperparameters
for MCMC-based estimates.

For the basic DRD model, which incorporates structured sparsity but
not smoothness, we compared three different inference methods: (1)
Laplace approximation based inference (``DRD-Laplace''); (2) Markov
Chain Monte Carlo (``DRD-MCMC''); and (3) Convex relaxation based
optimization (``DRD-Convex''). Lastly, for the smooth-DRD model, we
used two inference methods: (4) Laplace approximation
(``smooth-DRD-Laplace''); and (5) MCMC (``smooth-DRD-MCMC''). For the
non-MCMC estimators, we initialized the vector of Fourier domain
coefficients $\tilde{\vv}$ (eq.~\ref{vt}) to values of $10^{-3}$ in the first
iteration when learning $\vu$. The hyper-hyperparameters in the MCMC
methods (eq.~\ref{hyphyp}) were set to:
$m_n=-2, \sigma_n^2=5, m_b=-10, \sigma_b^2=8, a_\rho=4, b_\rho=5,
a_l=4, b_l=25, a_{\delta}=4, b_{\delta}=25.$

Fig.~\ref{fig:ins} shows the
reconstruction performance ($R^2$) of the true regression weight
$\vw$ for different estimators. The reconstruction performance
metric for an estimate $\hat \vw$ is given by
$R^2 = 1-\frac{||\vw-\hat{\vw}||_2^2}{||\vw-\bar{\vw}||_2^2}$, where
$||\cdot||_2$ denotes the $l_2$-norm and
$\bar{\vw} = \frac{1}{p} \sum_{i=1}^p \vw_i$ is the mean of vector $\vw$. The true weight vector was sampled from the smooth DRD model. The smooth-DRD estimate achieved the best performance in terms of $R^2$.
The ASD estimate also performed well, although the estimate was not
sparse, exhibiting small wiggles where the coefficients should be
zero. The standard DRD estimate recovered the support of $\vw$ with
high accuracy, but had larger error than smooth-DRD estimates due to
the smoothness of the true $\vw$. The other methods (lasso, ARD, TV-L1, GraphNet and SCAD) all had lower accuracy in recovering both the support
and values of the regression weights.

To provide insight into the performance of ARD, DRD, and smooth-DRD, we plotted the
inferred prior covariance of each model (Fig.~\ref{fig:priorcov}). The DRD and smooth-DRD models were both
similar to ARD in that they achieved sparsity by shrinking the prior
variance of unnecessary coefficients to zero. However, unlike ARD,
their inferred prior covariances both exhibited clusters of non-zero
coefficients, reflecting the dependencies introduced by the latent
Gaussian process. Note also that ARD and DRD covariances were both
diagonal, making the weights independent given the prior variances,
whereas the smooth-DRD covariance had off-diagonal structure that
induced smoothness.

% ==========================
% evaluation
% shows For evaluation purpose, the dataset is randomly split
%into a training set and a test set, and it is randomly generated for
%multiple times with an increasing training size ranging from 100 to
%400. We note that even with 500 samples, the problem already resides
%in the small-sample regime.
%
%

\begin{figure}[!t] % figure placement: here, top, bottom, or page
 \centering
 \includegraphics[width=0.8\textwidth]{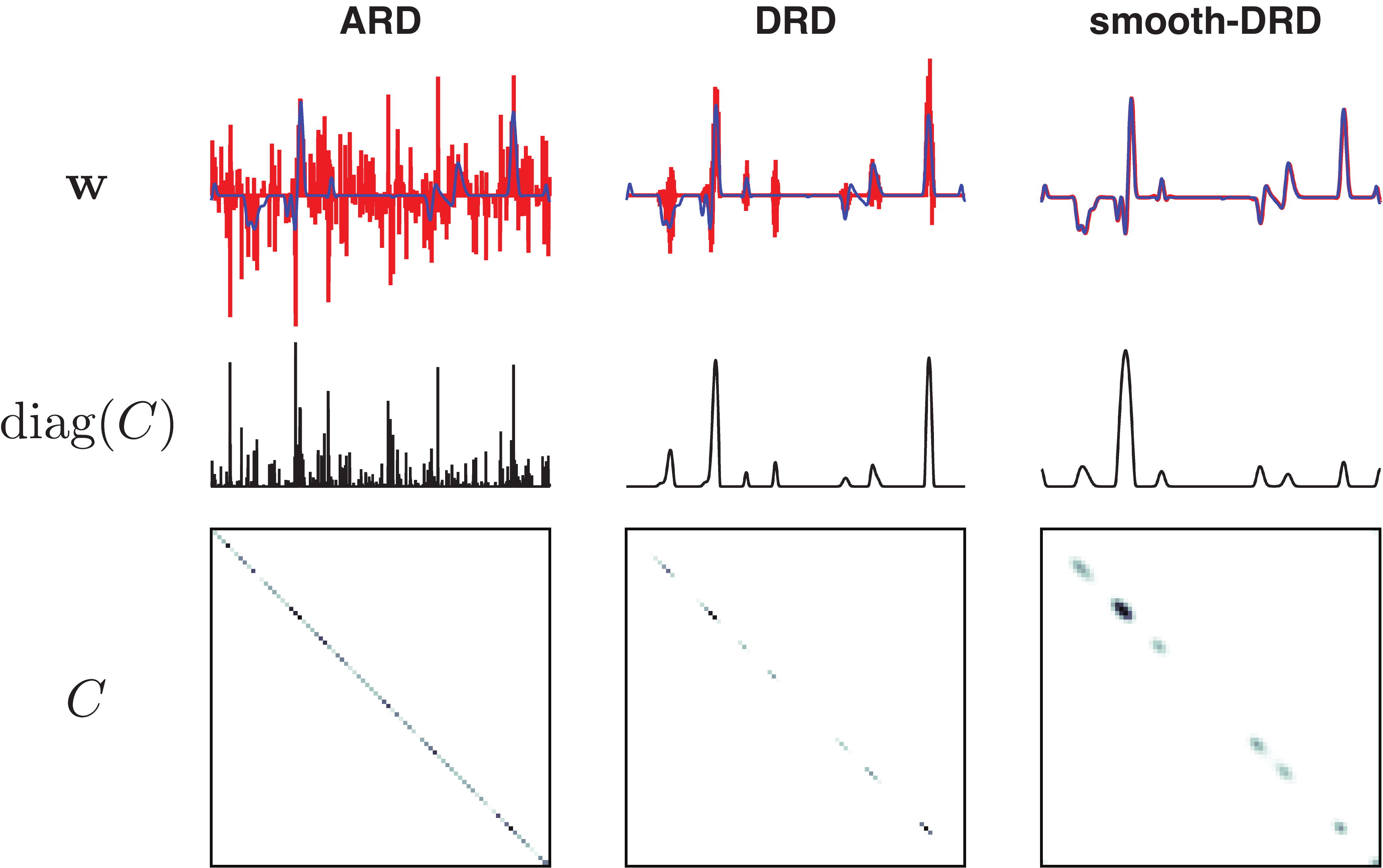} 
 \caption{Estimated filter weights and prior covariances. The upper row shows the true filter (blue) and estimated ones (red); the middle row displays the diagonal of each estimated covariance matrix; and the bottom row shows the entire estimated covariance matrix for each prior.}
 \label{fig:priorcov}
\end{figure}

\begin{figure}[!t] % figure placement: here, top, bottom, or page
 \centering
 \includegraphics[width=0.9\textwidth]{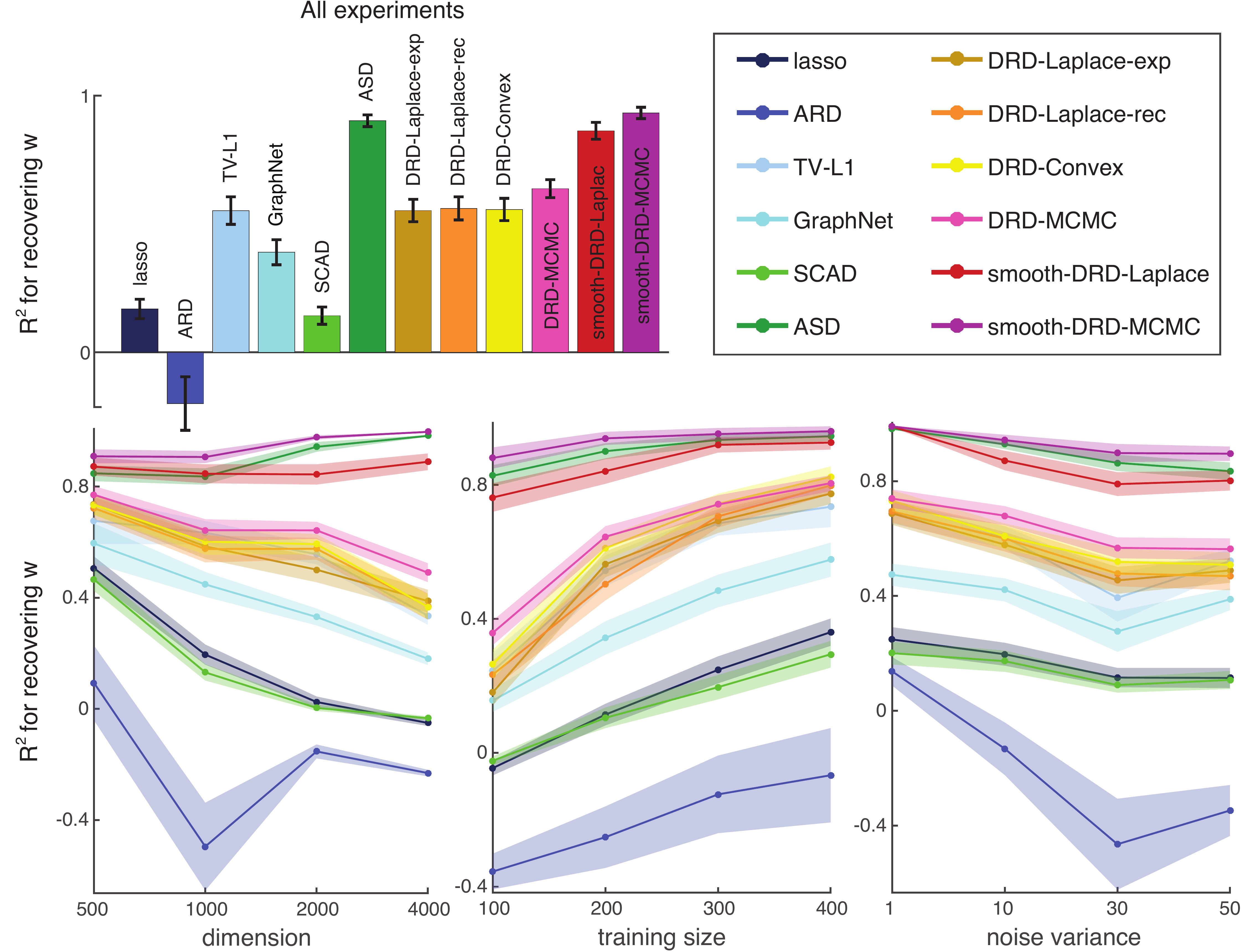} 
 \caption{Comparison of performance recovering true regression weights
 $\vw$ in simulated experiments as a function of dimensions $p$
 (lower left), number of samples $n$ (lower middle), and noise
 variance $\nsevar$ (lower right). Experiments were repeated five
 times for each of 64 combinatorial settings of four values for $p$,
 $n$, and $\nsevar$. Traces show average $R^2$ ($\pm1$ standard error of the mean (SEM)) as a
 function of each variable, and the bar plot (top row) shows average
 $R^2$ ($\pm1$ SEM) over all $5 \times 64 = 320$ experiments.}
 \label{fig:1dcurve_w}
\end{figure}

\begin{figure}[!t] % figure placement: here, top, bottom, or page
 \centering
 \includegraphics[width=0.9\textwidth]{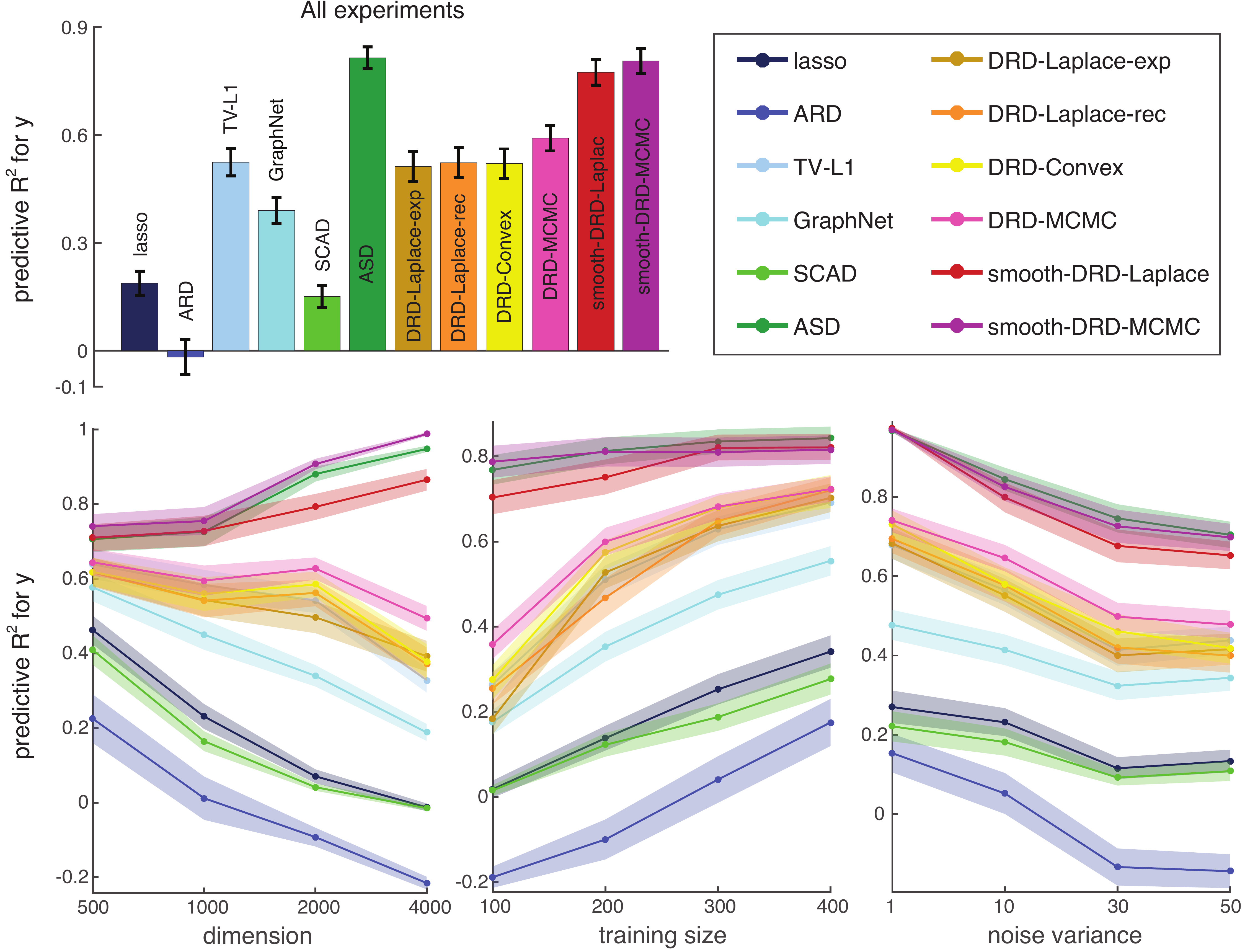} 
 \caption{Comparison of performance predicting held-out responses
 $\vy_{test}$ in simulated experiments as a function of
 dimensions $p$ (lower left), number of samples $n$ (lower
 middle), and noise variance $\nsevar$ (lower right). Traces show
 average $R^2$ ($\pm1$ SEM), and the bar plot (top row) shows
 average $R^2$ ($\pm1$ SEM) over all experiments. Simulation
 experiments were the same as in Fig.~\ref{fig:1dcurve_w}.}
 \label{fig:1dcurve_te}
\end{figure}

\begin{figure}[!t] % figure placement: here, top, bottom, or page
 \centering
 \includegraphics[width=0.9\textwidth]{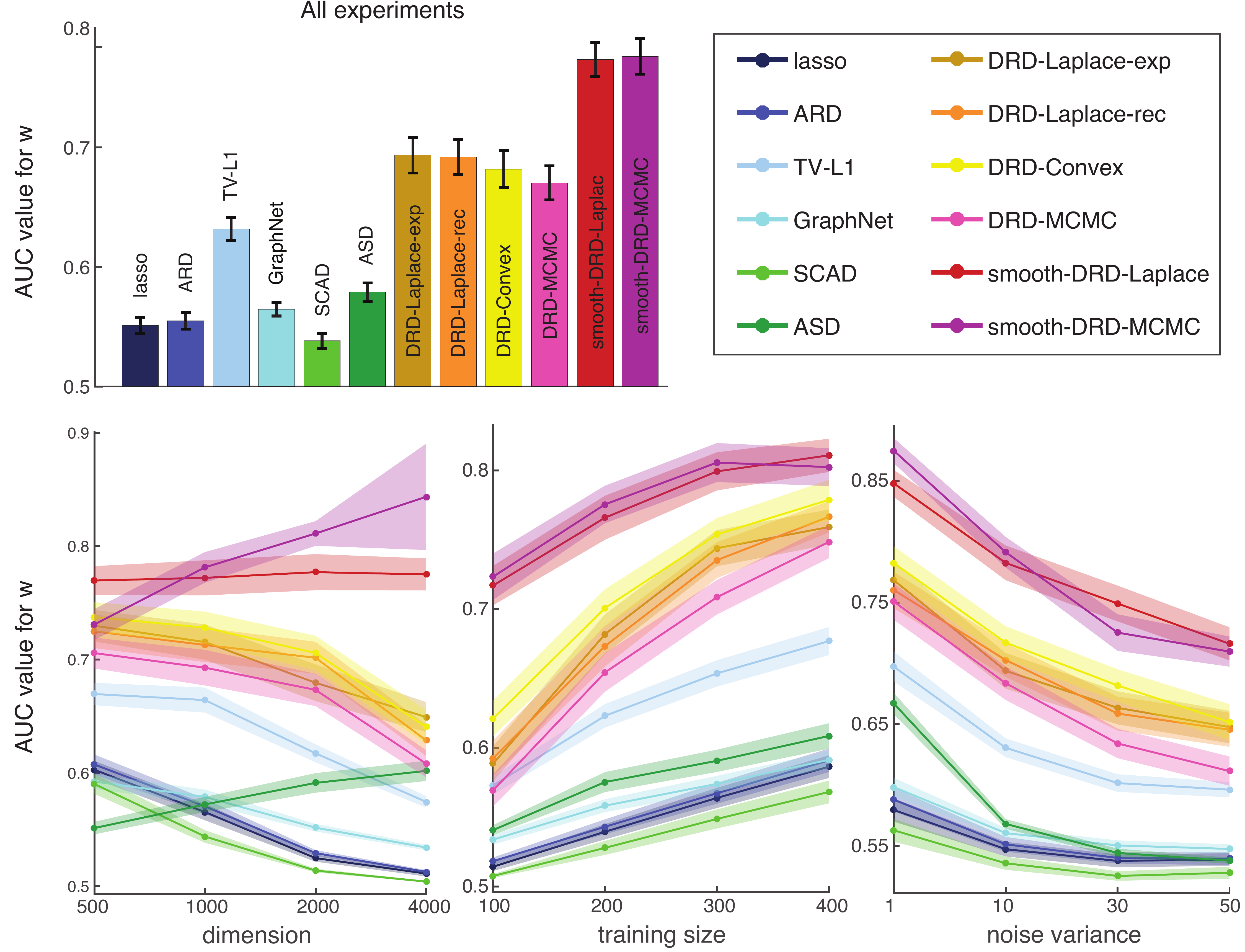} 
 \caption{Comparison of performance at recovering support of
 regression weights in simulated experiments, quantified with area
 under the ROC curve (AUC), as a function of dimensions $p$
 (lower left), number of samples $n$ (lower middle), and noise
 variance $\nsevar$ (lower right). Traces show average $AUC$ ($\pm1$
 SEM), and the bar plot (top row) shows average $AUC$ ($\pm1$ SEM)
 over all experiments. Simulation experiments were the same as in
 Fig.~\ref{fig:1dcurve_w}. Support recovery was quantified by
 taking all coefficients $|\vw_i|>0.005$ as non-zero.}
 \label{fig:1dcurve_auc}
\end{figure}
 
To quantitatively validate that our estimators succeed at identifying structured
sparse and smooth structure, we performed simulated experiments using
data drawn from the DRD generative model. For each experiment, we
generated simulated data with $n=500$ samples from a $p$-element
weight vector, and varied $p$ from 500 to 4000. We used
hyperparameters GP mean $b=-8$, GP length scale $l=p/40$, GP marginal
variance $\rho=36$, smoothness length scale $\delta=p/20$, and varied
measurement noise variance $\nsevar$ between 1 and 50. The sparsity
ratio for the sampled weights $\vw$ was approximately 0.20, where we considered weights with $|w_i| >0.005$ to be non-zero. 
We varied training set size from
$n=100$ to $400$ and kept a fixed test size of 100 samples. (We
noted that even with $n=400$ samples, the problem resides in the
$n < p$ small-sample regime). We repeated each experiment 5 times.

We compared performance of DRD estimators to the above-mentioned
estimators. Fig.~\ref{fig:1dcurve_w} shows the
reconstruction performance ($R^2$) of the true regression weights
$\vw$ for different estimators as a function of noise variance,
training set size and dimension. 

We found that Laplace and MCMC estimates for the
smooth-DRD model outperformed other estimators and were approximately
equally accurate, indicating that use of Laplace approximation did not
noticeably harm performance relative to the fully Bayesian
estimate. ASD had a good performance indicating that for these
extremely smooth weights, smoothness was a more useful form of
regularization than structured sparsity conferred by DRD. DRD models
came next. DRD-MCMC was slightly better due to the robustness of the
fully Bayesian inference. DRD-Laplace-exp and DRD-Convex employed the
exponential nonlinearity when transforming $\vu$ to the diagonal of
the covariance matrix. DRD-Laplace-rec used a soft-rectifier
nonlinearity which was more numerically stable. They had similar $R^2$
values with TV-L1 when recovering $\vw$, but were better than all the other
methods. We can also investigate the influence of each variable,
i.e. noise variance, training size or dimension. When increasing the
noise variance, all the $R^2$ values dropped; smooth-DRD-MCMC
outperformed others with $\sigma^2=50$ indicating the power of the
fully Bayesian estimate and the nontrivial effect of simultaneously
imposing local sparsity and smoothness. When increasing the training
size, DRD-Laplace models and DRD-Convex outperformed TV-L1 and were
comparable with DRD-MCMC, which was due to the decreasing optimizing
complexity. Also surprisingly, smooth-DRD estimators achieved nearly
perfect reconstruction performance over all the training sizes and all
the dimensions.

Fig.~\ref{fig:1dcurve_te} shows the $R^2$ performance for regression prediction on the test set for different estimators as a function of noise variance, training set size and dimension. The reconstruction performance for recovering the true $\vy_{test}$ is given by
$R^2 = 1-\frac{||\vy_{test}-\hat{\vy}_{test}||_2^2}{||\vy_{test}-\bar{\vy}_{test}||_2^2}$, where $\bar{\vy}_{test}= \frac{1}{n_{test}} \sum_{i=1}^{n_{test}} {\vy}_{test,i}$ is the mean of vector ${\vy}_{test}$. The top-left subfigure presents the averaged $R^2$ values and the confidence intervals for $\hat{\vy}_{test}$ over all runs. Similar to $R^2$ for $\vw$, ASD estimate, Laplace and MCMC estimates for the smooth-DRD model outperformed other estimators. 

Fig.~\ref{fig:1dcurve_auc} shows the AUC (Area Under the
receiver operator characteristic Curve) values for different estimators as a function
of noise variance, training set size and dimension. The AUC metric
quantifies accuracy in recovering the binary support for $\vw$, which
is useful for assessing the effects of structured sparsity. For this
metric, the smooth DRD estimators outperformed other
methods, and the ASD estimator performed much worse due to its lack of
sparsity. The Laplace approximation based DRD models performed
slightly better than DRD-MCMC because the sparsity of MCMC estimates
was diluted by averaging across multiple samples.

% One noteworthy phenomenon from is the higher
% $R^2$ of ASD over DRD. It's obvious that with the smoothness
% regularity, the parameter space is effectively reduced to a much lower
% dimension, more robust to the large measurement noise in the null
% space. Thus we don't really expect high $R^2$ values from DRD
% priors. The main advantage of DRD is the recovery of the sparsity
% support.

% ======================

%Thereby, we introduce the second criterion, the precision-recall
%curve, to evaluate the support reconstruction performance
%(Fig. \ref{1dcurve} B)). We first binarize $\hat{\vw}$ with a hard
%threshold, then calculate the precision and recall given the true
%support. We can observe that the ASD curve emphasizes more on the
%recall value by reason that it's able to discover all the true
%supports as well as including false positives. DRD-Laplace and DRD-Convex
%lean towards precision more while sacrificing some recalls. This
%implies that DRD priors don't favor blindly embracing everything but
%selectively exclude non-significant coefficients. DRD-MCMC is
%comparatively less sparse due to the fact that we average over all the
%samples after burn-in, thus ectopic samples could potentially affect
%the purity of sparsity. The figure also indicates an empirical
%inferiority of Laplace approximate (smooth-DRD-Laplace) over MCMC (smooth-DRD-MCMC), as
%a result of the high-dimensional non-convexity and multiple local
%optima. We will further analyze this point later.

Overall, smooth-DRD outperformed all other methods using all metrics. This shows that
combining sparsity and smoothness can provide major advantages over
methods that impose only one or the other. This flexible framework
for integrating structured sparsity and smoothness is one of the
primary contributions of our work, in contrast to previous methods in
the structured sparsity literature which consider only sparsity. The code and simulated results are available online\footnote{\url{https://github.com/waq1129/DRD.git}}.
% The slight advantage of the MCMC-based smooth-DRD estimator over the
% Laplace based estimator shows that
% a sampling implementation can improve the dilemma of the
% optimization. But even within the optimization framework, we will show
% in the following that a two-stage convex relaxation can also alleviate
% the problem.

\subsection{Computational complexity and optimization}\label{mc}

\begin{figure}[!t] % figure placement: here, top, bottom, or page
 \centering
 \includegraphics[width=0.9\textwidth]{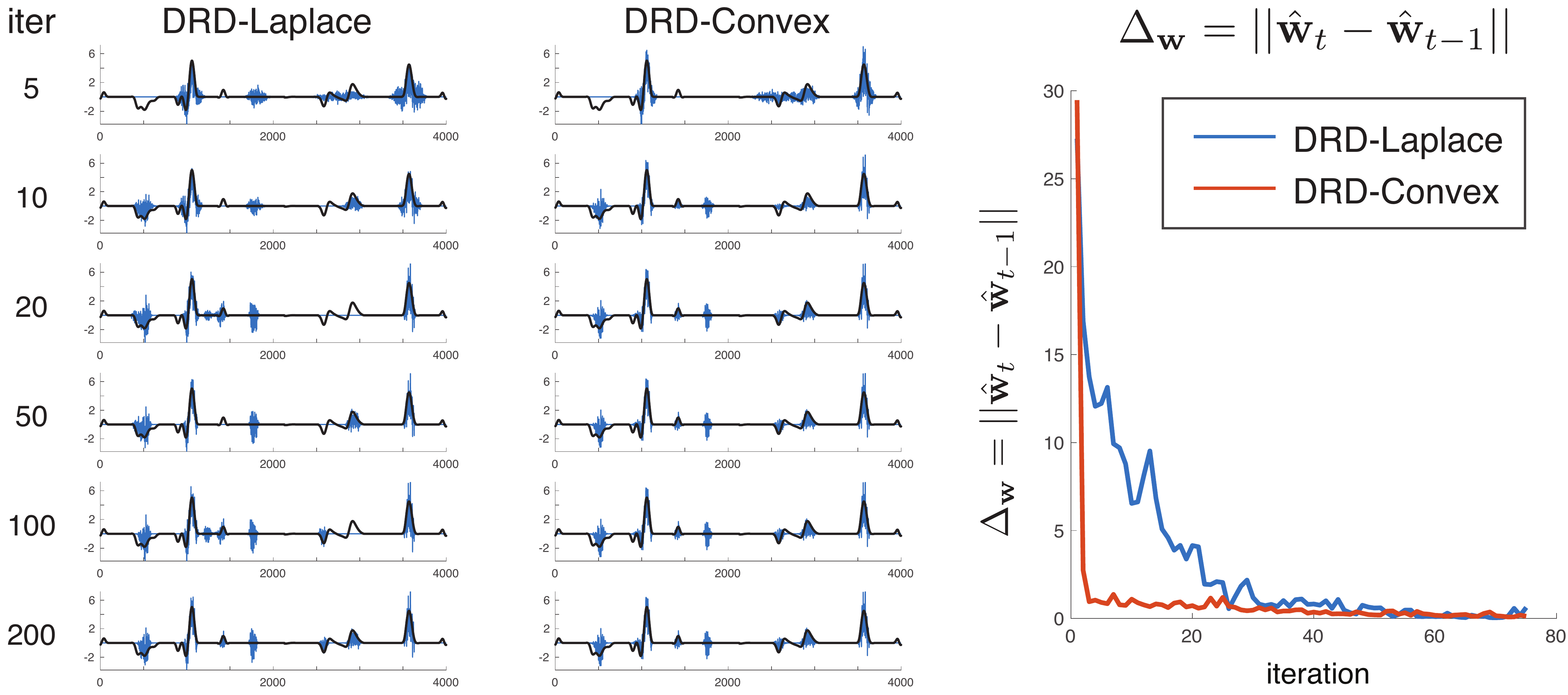} 
 \caption{Comparison of the optimization of weights $\hat{\vw}$ between DRD-Laplace and DRD-Convex. The first two columns show the weights obtained after 5, 10, 20, 50, 100, and 200 iterations with the same initialization under the two estimators, with true weights indicated in black. The third column shows the change in weights after each iteration of
the standard and convex optimization algorithms over the first 80 iterations, showing that the
convex algorithm made much smaller
adjustments to the weights after the first few iterations and thus converged more rapidly.}
 \label{fig:conv}
\end{figure}

\begin{figure}[!t] % figure placement: here, top, bottom, or page
 \centering
 \includegraphics[width=0.9\textwidth]{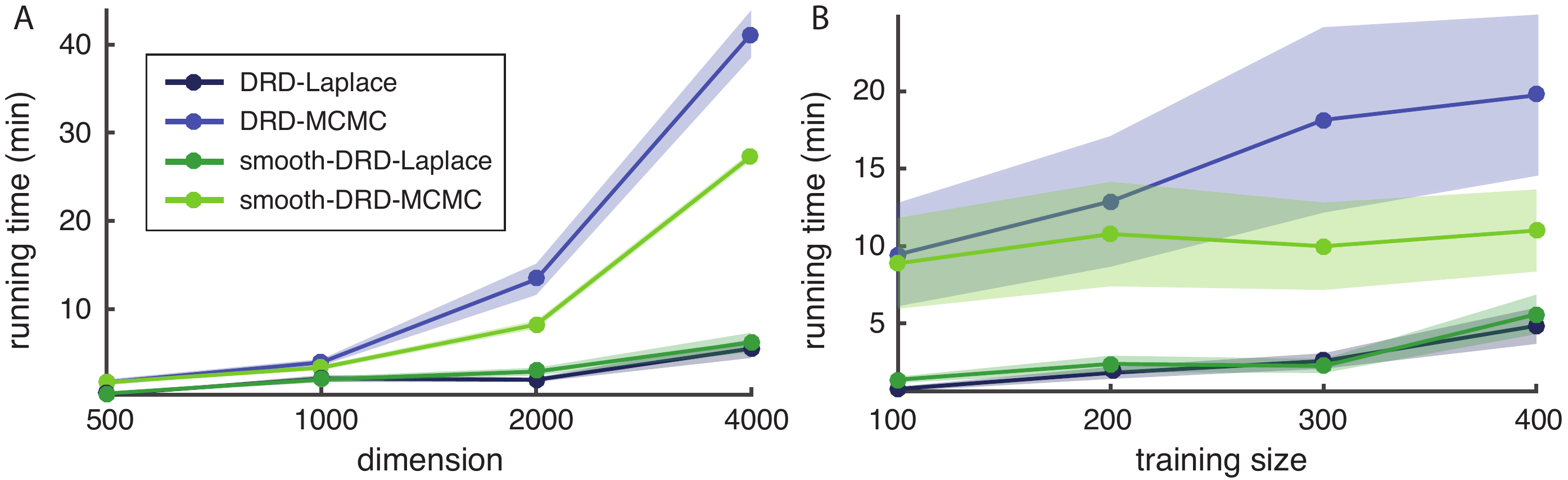} 
 \caption{Running times for DRD estimators as a function of dimensions
 $p$ (left) and number of samples $n$ (right). Each point is an
 average across 20 simulated experiments, and the shaded area
 represents $\pm$1 SEM. For MCMC, running
 time was determined by the time to collect 100 posterior samples
 after burn-in.}
 \label{fig:rt}
\end{figure}

We have described two basic approaches to inference for DRD: evidence
optimization using the Laplace approximation and MCMC sampling-based
inference. The main computational difficulty associated with
Laplace-based methods is the Hessian matrix, which provides the
precision matrix for the approximate Gaussian posterior
distribution. This matrix costs $O(p^2)$ to store and contributes
$O(p^3)$ time complexity for computation of the log-determinant. We
can reduce these costs to $O(p_{\scriptscriptstyle f}^2)$ storage and
$O(p_{\scriptscriptstyle f}^3)$ time, where
$p_{\scriptscriptstyle f}<p$ is the number of non-zero Fourier
coefficients in the spectral domain representation of the latent
Gaussian process. This savings can be significant in problems with
strong region sparsity, that is, when the zero coefficients arise in
large contiguous blocks. However, in very high dimensions, the
Laplace based methods may be practically infeasible due to the
impossibility of storing the Hessian.

We also described a two-step convex relaxation of the Laplace method
(DRD-Convex), which takes more time per iteration than the standard
Laplace method (DRD-Laplace) due to the need for a two-step
optimization procedure. However, we find that the DRD-Convex takes
fewer iterations to converge (see Fig. \ref{fig:conv}), and in some
cases proves more successful at avoiding sub-optimal local optima.

The MCMC-based inference has a time complexity of only
$O(n^2p_{\scriptscriptstyle f})$ per sample, due to the fact that
there is no need to compute the Hessian. However, MCMC-based inference
is typically slower due to the need for a burn-in period and the
generation of many samples from the posterior.
 Fig.~\ref{fig:rt}
shows a comparison of running time for the two inference
methods for both DRD and smooth-DRD models.
% and two types of DRD models as a function of the number of
% dimensions $p$ (left) and number of training samples $n$
% (right). Each point in each figure is an average over 5 repeated runs for 4 simulated
% experiments, thus 20 runs in total.
For the Laplace method, we used a stopping criterion that the change
in $\vw$ was less than 0.0001. For the MCMC method, we assessed burn-in using a criterion on the relative change in $\vw$, and then
collected 100 posterior samples. Inference for the smooth-DRD model
was faster than for standard DRD due to the fact that the smoothing
prior effectively prunes high frequencies, reducing the dimensionality
of the search space for $\vw$. In these experiments, increasing
dimension $p$ elicited larger increases in computation time than
increasing training set size $n$.

\section{Phase transition in sparse signal recovery}

\begin{figure}[!t] % figure placement: here, top, bottom, or page
 \centering
 \includegraphics[width=1\textwidth]{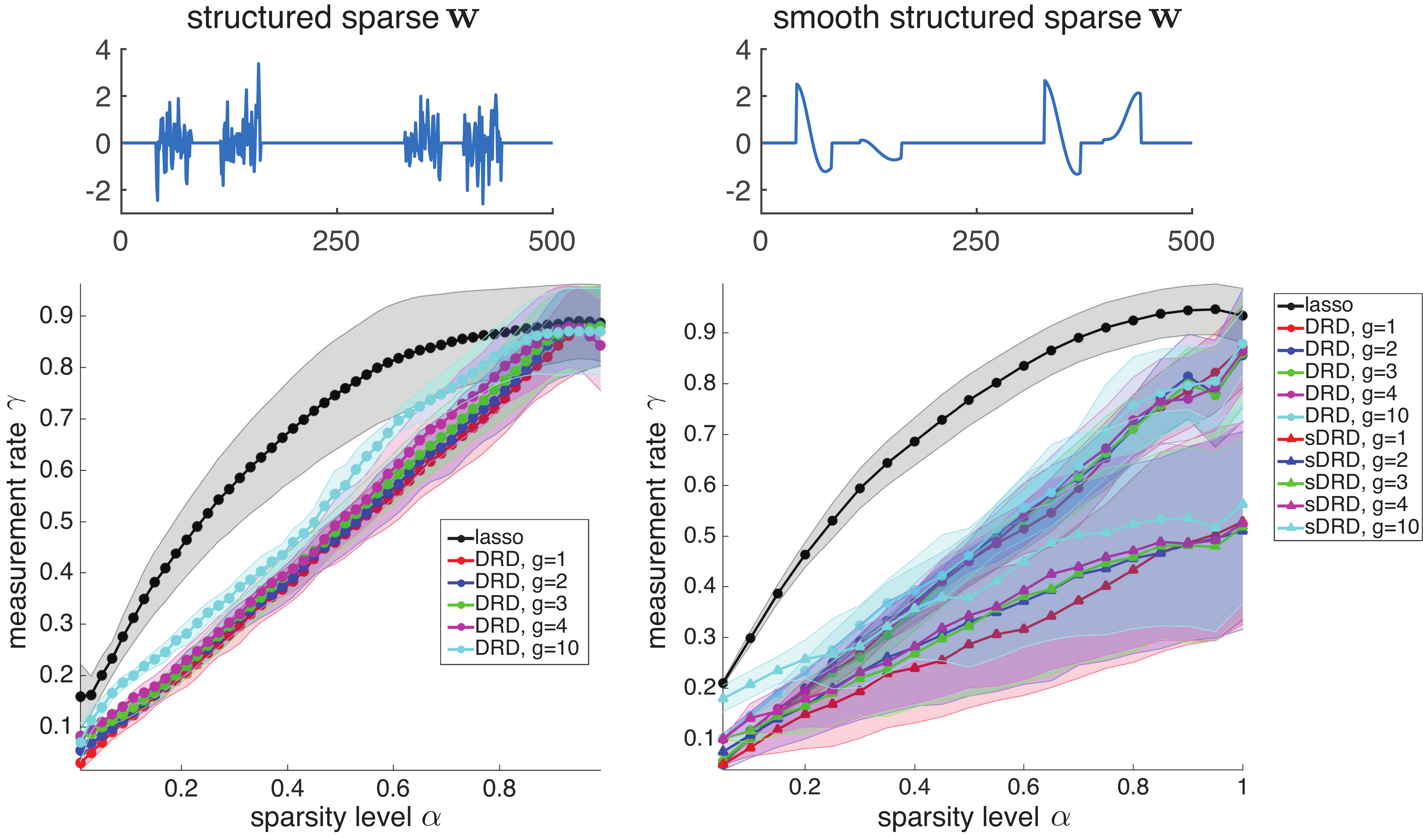} 
 \caption{
{\bf Phase transitions for DRD and smooth-DRD (sDRD) estimators on signals with structured sparsity.}
 Top row shows example signals $\vw$ of dimensions $p$ = 500, which
 contain randomly positioned blocks of non-zero
 coefficients. Non-zero coefficients were clustered into varying
 numbers of groups $g$, and drawn either iid from a
 standard normal distribution (left column) or from a Gaussian with a smoothing kernel (length scale was 20) (right column), to illustrate the effects of smoothness. To
 compute phase transition curves, we analyzed the recovery behavior of
 each estimator at every point in a 2D grid of sparsity levels $\alpha$ and
 measurement rates $\gamma$. At each point, we generated 10 random
 signals $\vw$, projected them noiselessly onto a
 random Gaussian measurement matrix $X$, and computed lasso and DRD
 estimates $\hat \vw$. We then calculated the $R^2$ value of the
 estimates for each trial at every grid point $(\alpha,\gamma)$.
% ($R^2=1-||\vw-\hat{\vw}||_2^2/||\vw||_2^2$, $R^2\in [0,1]$) of each
 An estimator was considered to achieve perfect recovery if all 10
 trials resulted in $R^2>0.95$, and perfect failure if all 10 trials
 resulted in $R^2\leq 0.95$; remaining points were considered to fall
 in the phase transition region. For each estimator, the shaded
 region indicates the phase transition region, and solid line
 indicates its center of mass along the y-axis.}
 \label{pt}
\end{figure}

Compressive sensing focuses on the recovery of sparse high-dimensional
signals in settings where the number of signal coefficients $p$
exceeds the number of measurements
$n$. % Specifically, theorists have derived the remarkable result that
% for signals with randomly located sparse coefficients, perfect
% recovery can be achieved with $n = O(p \log k)$ random measurements,
% where $k$ is the number of non-zero elements in the signal vector
% \citep{Donoho06,Candes08}. 
Recent work has shown that the recovery of sparse signals exhibits a
phase transition between perfect and imperfect recovery as a function
of the number of measurements
\citep{ganguli2010statistical,amelunxen2014living}. Namely, when the
measurement fraction $\gamma=n/p$ exceeds some critical value
that depends on signal sparsity, the signal can be recovered perfectly
with probability approaching 1, whereas for $\gamma$ below this value,
estimates contain errors with probability approaching 1. However,
these results were derived for the case where non-zero coefficients
are randomly located within the signal vector. Here we show that DRD
can obtain dramatic improvements over the phase transition curve for
{\it iid}\ sparse signals when the non-zero coefficients arise in
clusters.

We performed simulated experiments to examine the effects of group
structure on the empirical phase transition between perfect and
imperfect recovery of sparse signals. The measurement equation is
given by the noiseless version of the linear system we have considered
so far: $\vy = X\vw$, where $\vw \in \RR^p$ is the sparse signal,
$\vy\in \RR^n$ is the (dense) measurement vector, and here
$X \in \RR^{n \times p}$ is a (short, fat) random measurement matrix
with entries drawn {\it iid}\ from a standard normal distribution.
We define the sparsity of the signal as $\alpha = k/p$, where $k$ is
the number of non-zero signal coefficients in $\vw$. 

To explore the effects of group structure, we considered the signal
coefficients in $\vw$ to have 1D spatial structure and introduced a
parameter $g$ specifying the number of spatial groups or clusters into
which the non-zero coefficients were divided. When $g=1$, the non-zero
coefficients formed a single contiguous block of length $k$, with
location uniformly distributed within $\vw$. When $g>2$, the non-zero
coefficients were divided into $g$ blocks of size $k/g$, and the
locations of these blocks were uniformly distributed within $\vw$
subject to the constraint that blocks remained disjoint. Once the
sparsity pattern was determined, we sampled the non-zero coefficients
from a standard normal distribution.

% Then a $p$-length vector $p\vu$ is
% formulated by filling the $g$th block with $\vu_g$. Given $\vu$, the
% sparse signal $\vx$ is generated from a multivariate Gaussian
% distribution with zero mean and covariance $\mbox{diag}(e^\vu)$. An
% increasing $g$ implies more scatteredly distributed supports.

Fig. \ref{pt} shows the empirical phase transition curves for lasso
and DRD estimators for sparse signals with non-zero coefficients
clustered into varying numbers of groups $g$. These curves show the
boundary between perfect and imperfect signal recovery for different
estimators in the 2D space of signal sparsity level $\alpha$ and
measurement fraction $\gamma$. The left bottom plot shows that DRD
estimators achieved perfect signal recovery for much lower measurement
rates, even when non-zero coefficients were clustered into as many as
10 groups. Here DRD achieved transition to perfect recovery along the
main diagonal, whereas lasso exhibited an arc-shape transition curve
described previously
\citep{ganguli2010statistical,amelunxen2014living}, indicating that
more measurements were required to recovery signals of equal
sparsity. In the right bottom plot, we generated the non-zero coefficients
from a Gaussian distribution with a smoothing kernel whose length scale equaled to 20, so that
non-zero coefficients were smooth as well as sparse. In these plots,
we compared lasso estimates (which do not benefit from group or smooth
structure) to standard DRD and smooth-DRD estimates. This reveals
that smoothness allows for further reductions in measurement rates,
with perfect signal recovery achievable well below that of the
non-smooth DRD estimates.

% Clearly, our DRD estimator had far better performance in recovering
% sparse structured signal. The superiority pared away as the group
% structure vanishes. It is worth noting that when $g \le 4$, the curves
% for DRD were almost diagonal straight lines, while lasso exhibited an
% arc-shape curve consistent with other related literatures
% \citep{ganguli2010statistical,amelunxen2014living}. Moreover, we also
% inspected the capacity of lasso, DRD and smooth-DRD estimators for the
% structured sparse and smooth signal reconstruction when exactly
% recovering a group-structured sparse signal with smoothness in
% noiseless environments (Fig. \ref{pt} B)). Smooth-DRD resulted in a
% flatter phase transition, which implied that with the same sparsity
% level, it required even less measurements to achieve a similar
% reconstruction rate.

% -------------------------------

\section{Applications to brain imaging data}

Functional magnetic resonance imaging (fMRI) measures blood
oxygenation levels, which provide a proxy for neural activity in different
parts of the brain. Although these measurements are noisy and
indirect, fMRI is one of the primary non-invasive methods for
measuring activity in human brains,
and % fMRI is the workhorse of functional brain mapping,
it has provided insight into the neural basis for a wide variety of
cognitive abilities and functional pathologies.

A primary problem of interest in the fMRI literature is ``decoding'',
which involves the use of linear classification and regression methods
to identify the stimulus or behavior associated with measured brain
activity. Decoding is a challenging statistical problem because the
number of volumetric pixels or ``voxels'' measured with fMRI is
typically far greater than the number of trials in an experiment; a
full brain volume typically contains 50K voxels whereas most
experiments produce only a few hundred observations. 

Standard approaches to decoding have therefore tended to exploit
sparsity, corresponding to the assumption that only a small set of
brain voxels are relevant for decoding a particular set of stimuli
\citep{Carroll09}.
% \citep{}.\anqicom{I can't find much highly-qualified
% literature for fMRI decoding with only sparsity
% assumption.}
However, the set of voxels useful for a specific
decoding task are not randomly distributed throughout the brain, but
tend to arise in clusters; if one voxel carries information useful for
decoding, it is {\it a priori} likely that nearby voxels do too, given
that voxels represent an arbitrary discretization of continuous
underlying brain structures.
% Such sparse brain decoding problem has its unique property, which is
% that non-zero weights tend to arise in clusters, and are therefore
% not independent {\it a priori}. Sets of voxels allowing to
% discriminate between different brain states are expected to form
% small localized and connected areas. Moreover, the changes of the
% brain activity are considered to be contiguous and smooth.
We therefore explored brain decoding as an ideal application for
evaluating the performance of our estimators. 

% as an ideal application of our structured, sparse and
% smooth models, we investigate three datasets to
% assess the performance of our approaches.

\subsection{Gambling task}

\begin{figure}[!t] % figure placement: here, top, bottom, or page
 \centering
 \includegraphics[width=1\textwidth]{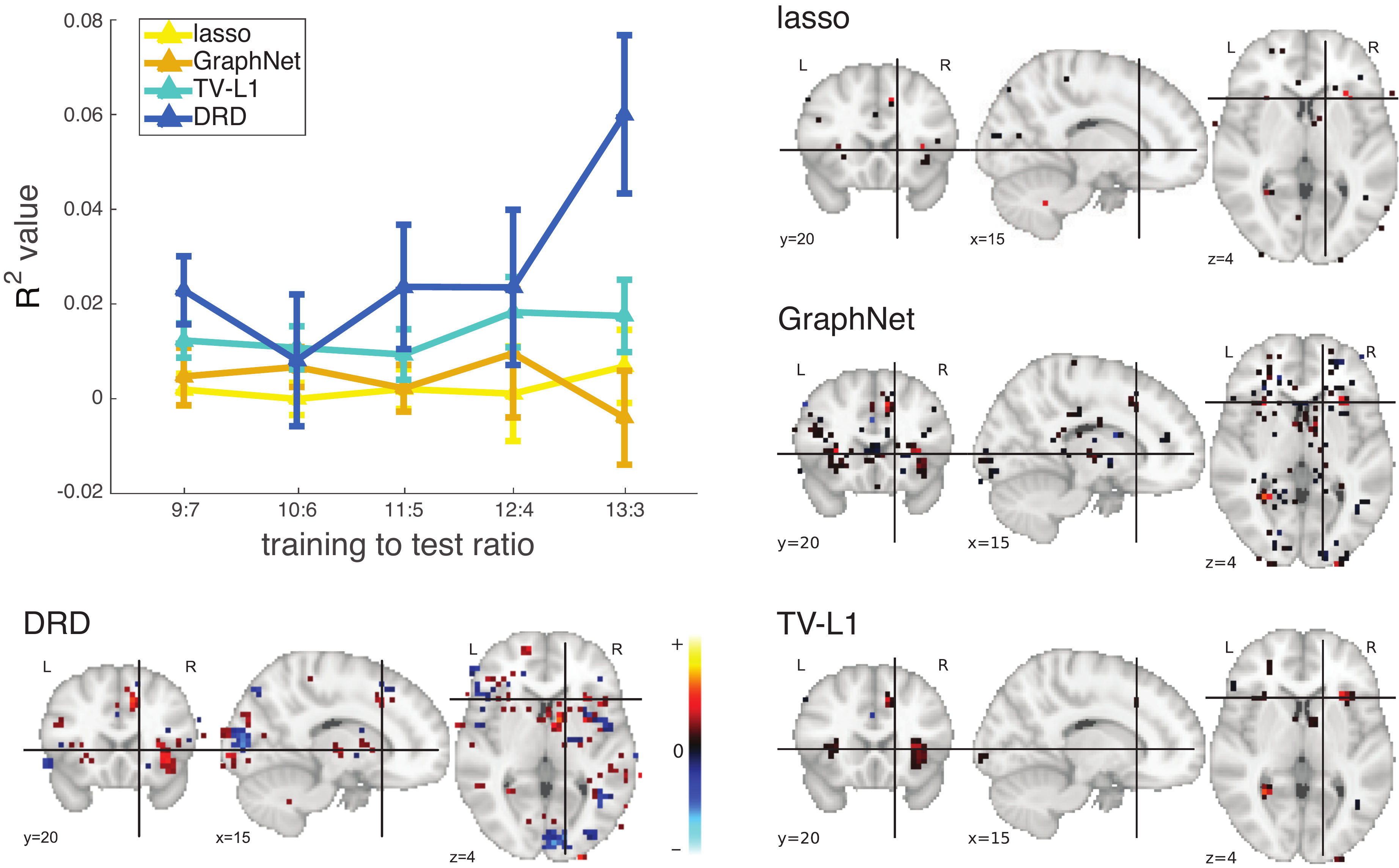} 
 \caption{Top left panel shows average test $R^2$ values on the
 gambling dataset as a function of the train-test ratio for lasso,
 GraphNet, TV-L1 and DRD. The x-axis is the train-test split ratio
 and the y-axis is the $R^2$ criterion. The remaining panels show the estimated fMRI weight maps, overlaid on a structural fMRI image. Colors indicate the
 sign and magnitude of the weights (see color bar, red for positive, blue for negative, black for small). The DRD figure
 was obtained by cutting off small weight coefficients with a small
 threshold value at 0.004 (about 12\% of the maximal absolute
 coefficient value).}
 \label{gamble}
\end{figure}
We first considered the regression problem of decoding gains and
losses from fMRI measurements recorded in a gambling task
\citep{tom2007neural,tom2011gamble}. In this experiment, event-related
fMRI was administered while healthy human participants performed a
decision-making task. In each trial, a gamble with a potential gain
and loss (each with 50\% probability) was presented for 3s, and the
participants were instructed to decide whether to accept or reject the
gamble. Experimenters varied amount of the potential gain and loss
across trials. The regression task is to predict the gain of the gamble from the fMRI images recorded during the decision-making task.

After standard preprocessing, the regression dataset consisted of 16
subjects with 48 fMRI measurements per subject (resulting from 6
repeated presentations of 8 different gambles). fMRI measurements were
obtained from a 3D brain volume of $40\times 48\times 34$ voxels, each
of size $4\times 4\times 4$ mm, from which a subset of 33,177 valid
brain voxels were used for analysis. The full dataset of 16 subjects
therefore contained $n=768$ samples in a $p=33,177$ dimensional space.

We evaluated inter-subject prediction performance by estimating
regression weights with data from a subset of the 16 subjects, and
computing prediction accuracy for data from held-out subjects. To
assess the performance, we varied the train-test ratio in number of
subjects from 9:7 to 13:3. We performed 10 different random train-test
splits for each ratio. We used 5-fold cross-validation to set
hyperparameters for all models, including DRD. For DRD models, the
Laplace approximation was intractable due to the high dimensionality
of the weight vector ($p=33,177$). We therefore computed MAP estimates
of the latent vector $\vu$ conditioned on the hyperparameters, and set
hyperparameters using cross-validation.

The curves in the top left panel of Fig.~\ref{gamble} show the
performance of lasso, GraphNet, TV-L1 and DRD estimators. We found
that DRD outperformed other estimators at nearly all train-test
ratios, with a noticeable advantage at the largest training set
size. However, we noted that the SNR of this dataset was low, making
inter-subject prediction difficult and resulting in low accuracy for
all methods. A non-trivial preprocessing stage, such as
hyper-alignment \citep{chen2015reduced}, could be used to map
different subjects into a shared subspace, which reduces the low SNR induced by inter-subject variability and could possibly improve
performance.

Fig.~\ref{gamble} also shows the inferred regression weights for each
estimator. The GraphNet and lasso weights had high sparsity,
presumably due to the low SNR of the dataset, while TV-L1 weights
exhibited small blocks of non-zero coefficients with constant value
within each block, consistent with the structure expected for the
TV-L1 penalty. The DRD weights were not sparse in a strict L0 sense,
due to the fact that sparsity arises from soft-rectification of
negative latents governing the prior variance. We therefore
thresholded DRD weights for plotting purposes, revealing that the
weights contributing most to prediction performance tended to cluster,
as expected, although weights within each cluster were not
constant. One noteworthy observation is that DRD estimate had positive
(red) as well as negative weights (blue), while other estimates were
largely devoid of regions with negative weights. Note that voxels in
black indicate weights close to zero, which therefore contributed relatively
little to readout.

% Although less statistically significant, we
% can still state that 
% inter-subject
% prediction itself is quite difficult since all the $R^2$ values are at
% the level of $10^{-2}$, which is not that statistically
% significant. This is because the bias and variations of fMRI
% measurements among subjects are strong enough to interfere effective
% learning among human brains. 

\subsection{Age prediction task}

\begin{figure}[!t] % figure placement: here, top, bottom, or page
 \centering
 \includegraphics[width=1\textwidth]{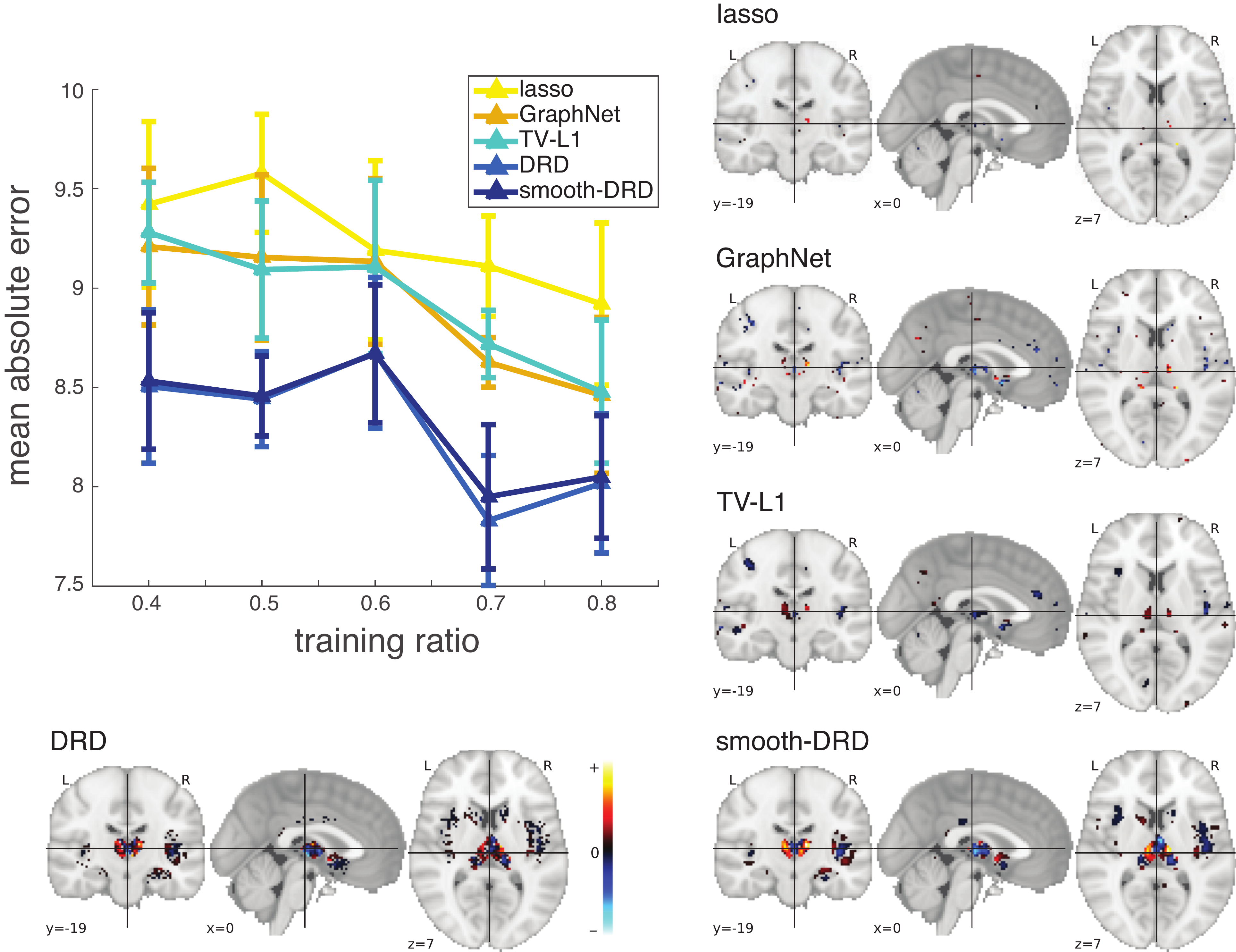} 
\caption{Top left panel shows average test mean absolute error values on the age regression dataset as a
 function of the training ratio for lasso, GraphNet, TV-L1, DRD
 and smooth-DRD. The x-axis is the ratio of the training data over
 the entire dataset and the y-axis is the mean absolute error criterion. The remaining panels show the estimated MRI weight maps, overlaid on a structural MRI image. Colors indicate the sign and magnitude of the weights (see color bar).}
 \label{age}
\end{figure}
 
Next we considered the problem of predicting a subject's age from a
measured map of gray-matter concentration, using data from the Open
Access Series of Imaging Studies (OASIS) \citep{marcus2007open}. The
OASIS dataset consisted of T1-weighted MRI scans data from 403
subjects aged 18 to 96, with 3 or 4 scans per subject. One hundred of
these subjects were over 60 years of age and had been clinically
diagnosed with Alzheimer's. The repeated scans provided high
signal-to-noise ratio, making the dataset feasible for inter-subject
analyses.

A natural regression problem for this dataset is to predict the age of
subject from their anatomical MRI data. The full dataset consisted of 403
samples with a $91\times 109\times 91$ 3D volume and 129,081
valid voxels. To assess the performance, we varied the training ratio
from $0.4$ to $0.8$ out of the 403 subjects, and averaged over 5
random splits for each ratio.

The curves in the top left panel of Fig.~\ref{age} show mean absolute errors between the true age and the
predicted age for each estimator, evaluated on test data. The DRD and
smooth-DRD estimators, which performed similarly well, achieved lower
error than lasso, GraphNet, and TV-L1 estimators. The inferred
regression weights (Fig.~\ref{age}) reveal that the most informative
voxels for predicting age lie in thalamus and the basal ganglia; this
is consistent with previous findings about the relationship between
Alzheimer's disease progression and anatomical changes in gray matter
\citep{de2008strongly, cho2014shape}. Thalamus and basal ganglia were
more clearly highlighted in the inferred DRD and smooth-DRD regression
weights, which contained larger and more concentrated regions around
these two structures. The smooth-DRD weights exhibited stronger
spatial clustering than DRD weights, although regression performance
was not noticeably different between the two estimators.

%\begin{figure}[!t] % figure placement: here, top, bottom, or page
% \centering
% \includegraphics[width=0.65\textwidth]{age.pdf} 
% \caption{Average test MSE values on the age regression dataset as a
% function of the training ratio for lasso, GraphNet, TV-L1, DRD
% and smooth-DRD. The x-axis is the ratio of the training size over
% the entire dataset and the y-axis is the MSE
% criterion. \jpcom{Make fonts bigger and replace MSE with mean abs
% error (if that's what it was) Write out ``graph net''}}
% \label{age}
%\end{figure}
\subsection{Visual recognition task}

%Fig. \ref{shoescrambled-pix} presents the brain map estimation for the shoe-vs-scrambled-pix pair. Similar to the face-vs-scrambled-pix, all the methods located the LOC area with clustered estimates, but DRD and smooth-DRD had less noisy and less scattered activations.
\begin{figure}[!t] % figure placement: here, top, bottom, or page
 \centering
 \includegraphics[width=0.6\textwidth]{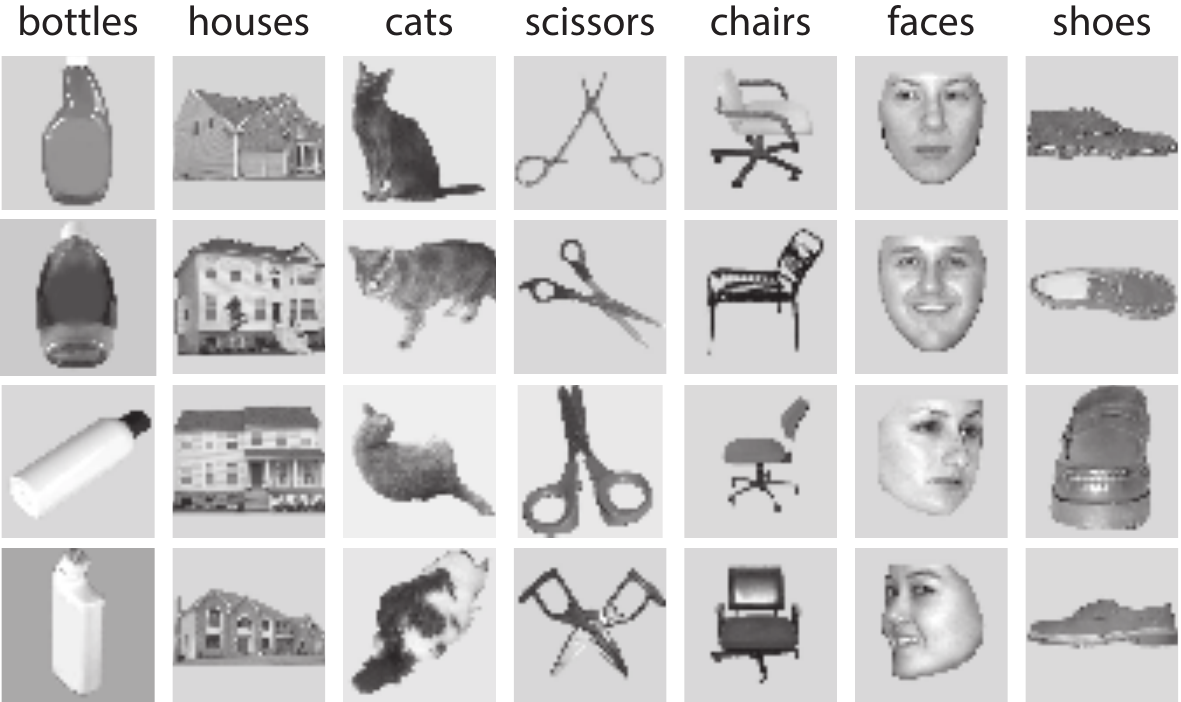} 
 \caption{Examples of the stimuli for 7 categories (except for scrambled control images).}
 \label{examples}
\end{figure}

In a third application, we examined the problem of decoding faces and
objects from fMRI measurements during a visual recognition task. We
used a popular fMRI dataset from a study on face and object
representation in human ventral temporal cortex
\citep{haxby2001distributed}. In this experiment, 6 subjects were
asked to recognize 8 different types of objects (bottles, houses,
cats, scissors, chairs, faces, shoes and scrambled control images,
examples in Fig. \ref{examples}). Each subject
participated 12 sessions of experiment. In each session, the subjects viewed images of eight object categories, with 9 full-brain measurements per category.

We assessed performance by training linear classifiers to discriminate
between pairs of objects, e.g., face vs.\ bottle, for each of the $28$
possible binary classifications among 8 objects. We trained the
weights $\vw$ for each model to linearly map fMRI measurements $\vx$
to binary labels $y \in \{-1,+1\}$ by minimizing squared error. Note
that a Bernoulli log-likelihood or logistic loss would be more
appropriate for this binary classification task, but we used squared
error loss because it allows for analytic marginalization over
weights. We assessed decoding accuracy on test data using predicted
labels $\hat y = \textrm{sign}(\vw\trp \vx)$. We divided 12 sessions
of data per subject into train-test splits of 5:7, 6:6 and 7:5. When
training with data from $N$ sessions, the training dataset consisted
of 18N full-brain measurements (9 measurements per category $\times$ 2
categories). Each measurement contained 24,083 valid voxels from a
$40\times 64\times 64$ 3D volume.

Fig. \ref{classification} shows the classification performance of
lasso, GraphNet, TV-L1, DRD, and smooth-DRD estimators, averaged over
6 subjects and across the three train-test splits. The smooth-DRD
estimate achieved the highest accuracy for most of the binary
classifications, while the DRD estimate achieved second highest
accuracy. The left column in Fig. \ref{haxby_brain} shows the
regression weights estimated for the house vs.\ bottle classification
task. The DRD and smooth-DRD weights both had significant positive
regions in the parahippocampal place area (PPA), an area known to
respond to images of places \citep{epstein1999parahippocampal} and
negative regions in the lateral occipital complex (LOC), an area known
to respond to objects \citep{eger2008fmri}. By comparison, TV-L1 and GraphNet weights in LOC were neither strong
nor clustered. The right column in Fig. \ref{haxby_brain} shows
regression weights for the face vs.\ scrambled-pixels classification
task. All methods managed to discover active responses around LOC and
fusiform face area (FFA) (specialized for facial recognition)
\citep{kanwisher1997fusiform}, though DRD and smooth-DRD weights
exhibited fewer isolated non-zero weights in areas far from the
temporal and occipital lobes.

\begin{figure}[!t] % figure placement: here, top, bottom, or page
 \centering
 \includegraphics[width=1\textwidth]{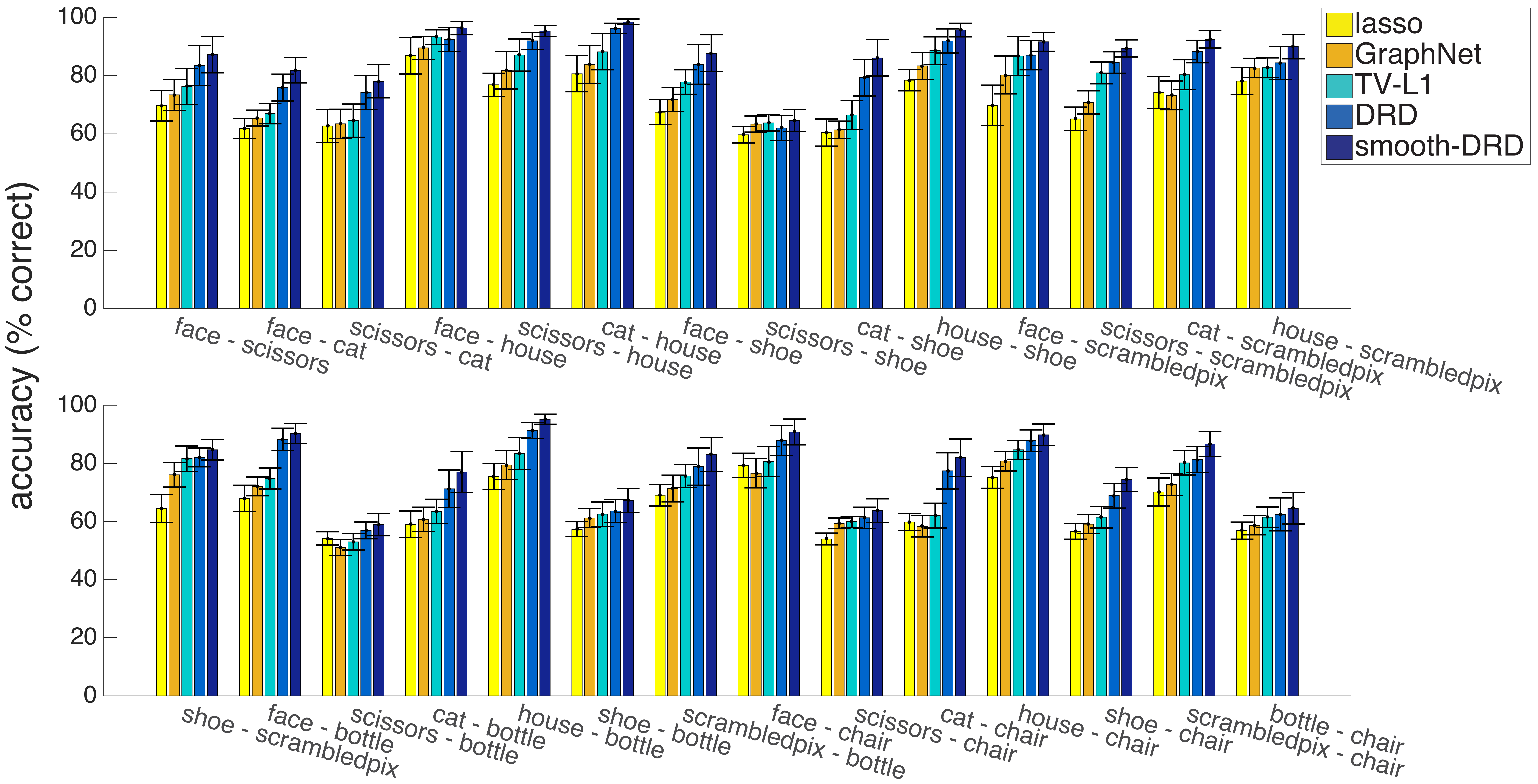} 
 \caption{Classification accuracy performance for lasso, GraphNet,
 TV-L1, DRD and smooth-DRD, averaged across 6 subjects and three
 different train-test splits (5:7, 6:6 and 7:5). Error bars
 indicate $\pm$1 SEM, averaged over
 train-test splits. The x-axis labels indicate pairs of object
 categories considered for binary classification. }
 \label{classification}
\end{figure}

%\begin{figure}[!t] % figure placement: here, top, bottom, or page
%\centering
%\includegraphics[width=1\textwidth]{classification6.pdf} 
%\caption{Average classification accuracies on the test set with 6 training sessions.}
%\label{classification6}
%\end{figure}
%
%\begin{figure}[!t] % figure placement: here, top, bottom, or page
%\centering
%\includegraphics[width=1\textwidth]{classification7.pdf} 
%\caption{Average classification accuracies on the test set with 7 training sessions.}
%\label{classification7}
%\end{figure}

\begin{figure}[!t] % figure placement: here, top, bottom, or page
 \centering
 \includegraphics[width=1\textwidth]{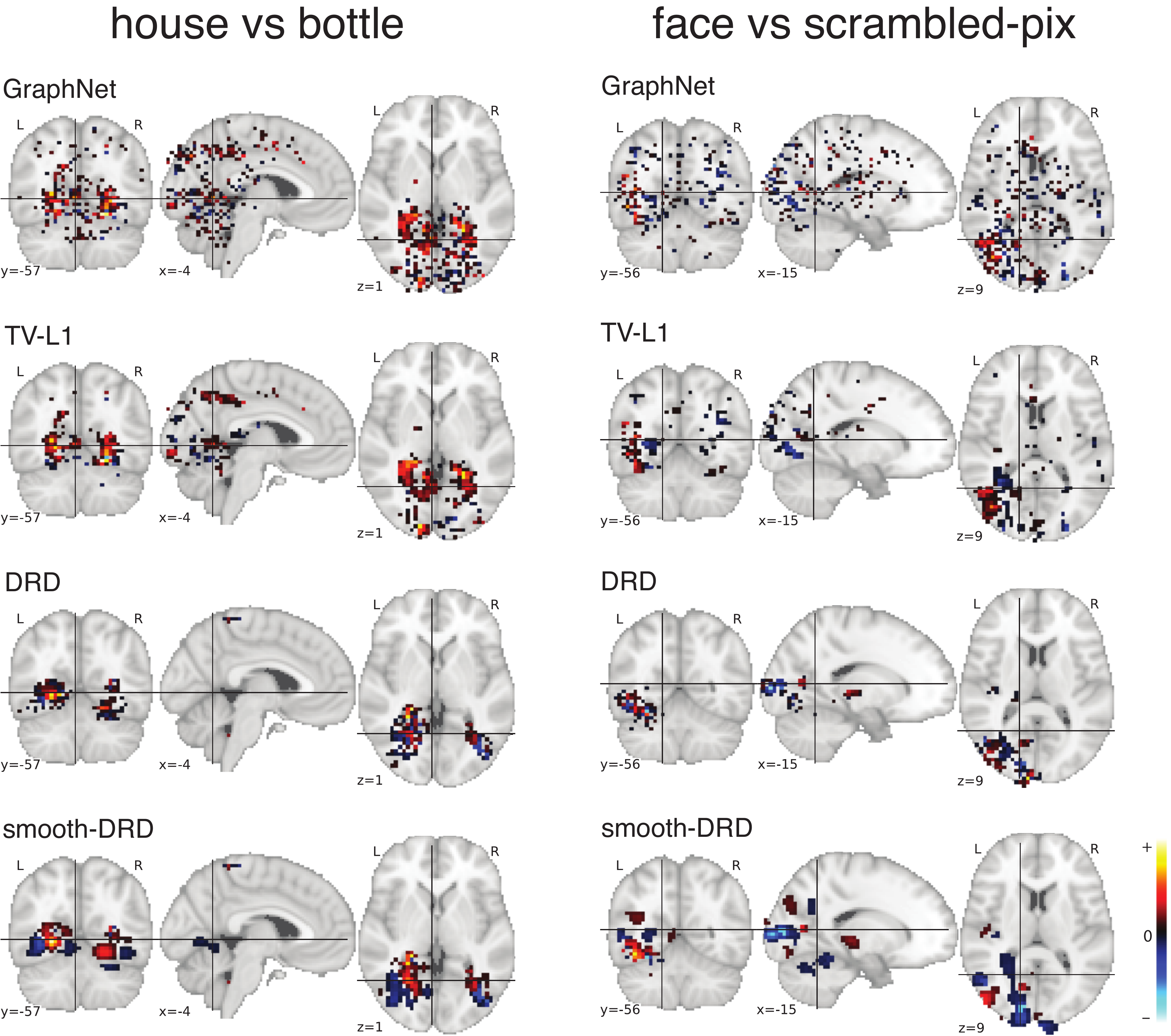} 
 \caption{\textbf{Left column}: weight maps for the house vs bottle pair. \textbf{Right column}: weight maps for the face vs scrambled-pix pair. The methods shown are GraphNet, TV-L1, DRD and smooth-DRD.}
 \label{haxby_brain}
\end{figure}

\section{Discussion}

In this paper, we introduced dependent relevance determination (DRD),
a hierarchical Bayesian model for sparse, localized, and smooth
regression weights. This model is appropriate for regression settings
in which the regressors can be arranged geometrically as a vector,
matrix, or tensor, and exhibit local dependencies within this
structure (e.g., tensors of 3D brain measurements).

The DRD model takes its inspiration from the automatic relevance
determination (ARD) model \citep{tipping2001sparse}, but adds a
Gaussian process to introduce dependencies between prior variances of
regression weights as a function of distance between their regressors.
Samples from the DRD prior therefore exhibit clustering of
non-zero weights. However, weights sampled from the standard DRD
prior are uncorrelated, meaning there is no tendency for such weight tensors
to be smooth. For this reason, we introduced the
smooth-DRD model, which composes the standard DRD prior with a smoothness-inducing covariance
function \citep{sahani2003evidence,park2011receptive}. Weights sampled
from the resulting model tend to be sparse as well as smooth, with islands
of smooth, non-zero weight features surrounded by oceans of zeros. 

We described two methods for inferring the model parameters: one based
on the Laplace approximation and a second based on MCMC. We proposed
a novel variant of the Laplace based approach involving a two-stage
convex relaxation of the log posterior. 

Lastly, we carried out simulated and real application experiments to compare DRD with a variety of
other methods, including lasso, SCAD, GraphNet, TV-L1, and etc. For an $L_2$ loss, a convex $L_1$ penalty leads to strong amplitude bias for lasso, while a non-convex penalty, e.g., SCAD, can overcome the strong bias. Lasso could underfit in order to get a tight support due to incorrect amplitudes of the coefficients when doing cross-validation using a $L_2$-based criterion. Therefore, we included SCAD apart from lasso and employed AUC as the metric for support identification. The synthetic experiments generated true weight vectors from the proposed generative model, thus aiming at validating the proposed model. We also examined phase transitions between perfect and imperfect recovery using data from a generating model different from DRD, showing that the
DRD and smooth-DRD model could achieve perfect recovery with far fewer
measurements when the non-zero weights in a signal were clustered. We further applied our models to three real-world brain imaging datasets. We found that DRD and smooth DRD achieved better prediction performance than previous methods, while also achieving
high interpretability with regression/classification weights defined by smooth,
clustered sets of voxels. Note that for the final fMRI decoding application task, a Bernoulli
noise model (corresponding to a logistic loss function) would have
been more appropriate than the Gaussian noise model we assumed, due to
the binary nature of the outputs. Gaussian noise is a useful modeling
assumption for the DRD model because it yields an analytical
expression for the conditional evidence (eq.~\ref{eq:evidence2}),
which can be directly optimized for the latent process governing
region sparsity. An important direction for future research will
therefore be to extend DRD to incorporate Bernoulli and other
likelihoods. Such extensions will need to use approximate inference
methods or sampling to compute the integral over regression weights
(eq.~\ref{eq:evidence}), but there is no conceptual barrier to
incorporating region sparsity into models with binary and other
non-Gaussian outputs.

The DRD and smooth-DRD models offer a powerful statistical framework
for attacking problems in which sparsity is overlaid with local
dependencies, a scenario that arises commonly in (for example) spatial
and temporal regression problems. Recent work has shown successful
application of a closely related model for capturing dependencies
between sparse variables in genomic data
\citep{engelhardt2014Bayesian}. In future work, we expect the DRD
framework to find applications beyond the regression/classification setting,
including structured latent factor models \citep{chen2015reduced} and
false discovery rate estimation \citep{Tansey17}.

% We carried out a batch of experiments to illustrate the potential
% applications of the DRD priors to reconstruct a sparse signal,
% $\vw = (w_1,...,w_p)^\top $, from a limited set of linear measurements
% $\vy = (y_1,..., y_n)^\top $, with $n\ll p$. The measurements y were
% obtained after projecting the signal $\vw$ onto an $n\times p$
% measurement matrix $X$, that is, $\vy = X\vw+\mathbf{\epsilon}$, where
% $\mathbf{\epsilon} = (\epsilon_1,..., \epsilon_n)^\top \sim N
% (\mathbf{0},\nsevar\mathbf{I})$. If $\vw$ is sparse or even smooth,
% it is possible to reconstruct this vector accurately from $\vy$ and
% $X$ using fewer measurements than the number of degrees of freedom of
% the signal, which is the limit required to guarantee the
% reconstruction of arbitrary signals. When imposing cluster structure
% and smoothness on the sparse signal, we can show that DRD priors would
% outperform other sparse learning methods, requiring much less
% measurements to achieve a comparable recover rate and estimation
% accuracy.

% Acknowledgements should go at the end, before appendices and references

%\acks{We would like to acknowledge support for this project
%from the National Science Foundation (NSF grant IIS-9988642)
%and the Multidisciplinary Research Program of the Department
%of Defense (MURI N00014-00-1-0637). }

% Manual newpage inserted to improve layout of sample file not
% needed in general before appendices/bibliography.

\acks{This work was supported by grants from the Sloan Foundation
 (JWP), McKnight Foundation (JWP), Simons Collaboration on the Global
 Brain (SCGB AWD1004351, JWP), and the NSF CAREER Award (IIS-1150186,
 JWP).}

\bibliography{bib_depARD}

\newpage
\appendix
\section{The Hessian of the negative log-posterior in Laplace approximation for smooth-DRD}\label{hess}
Now we derive the Hessian matrix inside of the inverse of (eq.~\ref{eq:Lam}).
\begin{equation}
 H=-\frac{\partial^2}{\partial \vu \partial\vu^\top} \Big[ \log p(\vy| X,\vu,\nsevar,\delta) + \log p(\vu| \vtheta_{drd}) \Big].
\end{equation}
The first term to take the partial derivatives of is (eq.~\ref{eq:datalike1}):
\begin{align}
 \log p(\vy|X,\vu,\nsevar,\delta) &= -\frac{1}{2}\log|XCX^\top +\nsevar I|-\frac{1}{2}\vy^\top(XCX^\top +\nsevar I)^{-1}\vy + const,
\end{align}
 and the second is (eq.~\ref{eq:datalike2}):
 \begin{align}
 \log p(\vu|\vtheta_{drd}) &= -\frac{1}{2}(\vu-b\vones)^\top K^{-1}(\vu-b\vones)-\frac{1}{2}\log |K|+const.
\end{align}
Define $S = X C X\trp + \nsevar I$, where $C = C_{drd}^{\frac{1}{2}}\, \Sigma\, C_{drd}^{\frac{1}{2}}$. 
Let $Z = X C_{drd}^{\frac{1}{2}}\, \Sigma^{\frac{1}{2}}$, then,
\begin{eqnarray}
S &=& XC_{drd}^{\frac{1}{2}}\, \Sigma\, C_{drd}^{\frac{1}{2}}X^\top+\nsevar I = ZZ^\top+\nsevar I,\\
\log p(\vu|\vy,X, \vtheta) &=& -\frac{1}{2}\vy^\top S^{-1} \vy -\frac{1}{2}\log|S|-\frac{1}{2}(\vu-b\vones)^\top K^{-1}(\vu-b\vones)+const.
 \end{eqnarray}
The first derivative with respect to $\vu_i$ is given by:
\begin{eqnarray}
\frac{\partial}{\partial \vu_i} \log p(\vu|\vy,X, \vtheta) &=& \frac{1}{2}\ \frac{\partial}{\partial \vu_i}\left(-\vy^\top S^{-1} \vy -\log|S| -(\vu-b\vones)^\top K^{-1}(\vu-b\vones)\right) \\
&=&\mbox{Tr} \left [Z^\top S^{-1}\vy\vy^\top S^{-1} \left( \frac{\partial}{\partial \vu_i}Z \right)-Z^\top S^{-1} \left( \frac{\partial}{\partial \vu_i}Z \right)\right]\\
&&-\left[K^{-1}(\vu-b\vones)\right]_i,
\end{eqnarray} 
where
\begin{eqnarray}
 \frac{\partial}{\partial \vu_i}Z &=& X \left( \frac{\partial}{\partial \vu_i} C_{drd}^{\frac{1}{2}} \right) \Sigma^{\frac{1}{2}}.
\end{eqnarray} 
The second derivative with respect to $\vu_j$ is given by:
\begin{eqnarray}
\frac{\partial^2}{\partial \vu_i \partial \vu_j } \log p(\vu|\vy, X,\vtheta) &=&H_1 + H_2 + H_3-K^{-1}_{ij}=-H_{ij}, 
\end{eqnarray} 
\begin{eqnarray}
H_1 &=& \mbox{Tr} \left [Z^\top S^{-1}\vy\vy^\top S^{-1} \left( \frac{\partial^2}{\partial \vu_i \partial \vu_j } Z \right)-Z^\top S^{-1}\left( \frac{\partial^2}{\partial \vu_i \partial \vu_j } Z \right)\right],
\end{eqnarray} 
\begin{eqnarray}
H_2 &=& \mbox{Tr} \left [\left( \frac{\partial}{\partial \vu_j}Z \right)^\top S^{-1}\vy\vy^\top S^{-1} \left( \frac{\partial}{\partial \vu_i}Z \right)-\left( \frac{\partial}{\partial \vu_j}Z \right)^\top S^{-1} \left( \frac{\partial}{\partial \vu_i}Z \right)\right],
\end{eqnarray} 
\begin{eqnarray}
H_3 &=& 2\mbox{Tr} \left [-Z^\top S^{-1}\left( \frac{\partial}{\partial \vu_j}Z \right) Z^\top S^{-1}\vy\vy^\top S^{-1} \left( \frac{\partial}{\partial \vu_i}Z \right)\right.\\
&&\left.-Z^\top S^{-1}\vy\vy^\top S^{-1} \left( \frac{\partial}{\partial \vu_j}Z \right)Z^\top S^{-1} \left( \frac{\partial}{\partial \vu_i}Z \right)\right.\\
&&\left.+Z^\top S^{-1} \left( \frac{\partial}{\partial \vu_j}Z \right)Z^\top S^{-1} \left( \frac{\partial}{\partial \vu_i}Z \right)\right],
\end{eqnarray} 
where 
\begin{eqnarray}
 \frac{\partial^2}{\partial \vu_i \partial \vu_j}Z &=&X \left( \frac{\partial^2}{\partial \vu_i \partial \vu_j }C_{drd}^{\frac{1}{2}} \right) \Sigma^{\frac{1}{2}}.
\end{eqnarray} 
For DRD only, we can just derive the Hessian by replacing $C_{drd}^{\frac{1}{2}} \Sigma^{\frac{1}{2}}$ with $C_{drd}^{\frac{1}{2}}$.

\section{Proof of convexity of $\mathcal{L}_1(\vu)$}\label{conv}
We ignore the scaling $\frac{1}{2}$ here for simplicity, and write,
\begin{eqnarray}
\mathcal{L}_1(\vu) &=& \log |X \diag(e^\vu)X^\top+\nsevar I|\\
&=& \log | \diag(e^\vu)\frac{X^\top X}{\nsevar} + I|+const\\
&=& \log | \frac{X^\top X}{\nsevar} + \diag(e^{-\vu})|+ \log | \diag(e^{\vu})|+const.
\end{eqnarray} 
Let $V=\frac{X^\top X}{\nsevar}$, which is p.s.d., then
\begin{equation} 
\begin{aligned}
\frac{\partial }{\partial\vu} \log | V+ \diag(e^{-\vu})|&=-\diag\left((\diag(e^{-\vu})+ V)^{-1}\odot \diag(e^{-\vu})\right)\\
\frac{\partial^2}{\partial^2\vu} \log | V+ \diag(e^{-\vu})|&=\left(V(\diag(e^\vu)V+I)^{-1}\right)\odot\left(\diag(e^{-\vu})+V\right)^{-1},
\end{aligned}
\end{equation} 
where $\odot$ is the Hadamard product. Moreover, we know that
\begin{equation} 
\begin{aligned}
V(\diag(e^\vu)V+I)^{-1}&=\frac{X^\top X}{\nsevar}(\diag(e^\vu)\frac{X^\top X}{\nsevar}+I)^{-1}\\
&=X^\top(X\diag(e^\vu)X^\top +\nsevar I)^{-1}X\succeq 0.
\end{aligned}
\end{equation} 
Thanks to the Schur product theorem stating that the Hadamard product of two positive semi-definite matrices is also a positive semi-definite matrix, we have $\frac{\partial^2}{\partial^2\vu} \log | V+ \diag(e^{-\vu})|\succeq 0$, thus $\log | \frac{X^\top X}{\nsevar} + \diag(e^{-\vu})|$ is convex in $\vu$. In addition, $\log | \diag(e^{\vu})|$ is also convex in $\vu$. Therefore, $\mathcal{L}_1(\vu)$ is convex in $\vu$. 

%\begin{eqnarray}
%\mathcal{L}_1 = \log |X \diag(e^\vu)X^\top+\nsevar I|= \log | \diag(e^\vu)\frac{X^\top X}{\nsevar} + I|+const
%\end{eqnarray} 
%Let $V=\frac{X^\top X}{\nsevar}$, which is p.s.d. 
%
%If $V$ is invertible, then define $W=V^{-1}\succeq 0$.
%\begin{equation} 
%\begin{aligned}
%\mathcal{L}_1 &=\log | \diag(e^\vu)V+ I|+const=\log | \diag(e^\vu)+ W|+const\\
%\frac{\partial \mathcal{L}_1}{\partial\vu} &=\diag\left((\diag(e^\vu)+ W)^{-1}\odot \diag(e^\vu)\right)\\
%\frac{\partial^2 \mathcal{L}_1}{\partial^2\vu} &=\left(\diag(e^\vu)+W\right)^{-1}\odot\left(\diag(e^{-\vu})+V\right)^{-1}\succeq 0
%\end{aligned}
%\end{equation} 
%where $\odot$ is the Hadamard product. This is resulted from the Schur product theorem stating that the Hadamard product of two positive definite matrices is also a positive definite matrix.
%Therefore, $\mathcal{L}_1(\vu)$ is convex in $\vu$. 
%

\section{Proof of boundedness and non-emptiness of $\mathcal{F}(\vu)$}\label{bound}
We want to prove that for (eq.~\ref{mapA}), when $||\vu||\rightarrow\infty$, we have $\mathcal{F}(\vu) \rightarrow \infty$.
Rewrite $\mathcal{F}$ here,
\begin{eqnarray}
\mathcal{F}(\vu)={\vz^k}^\top\vh(\vu)-\mathcal{L}_\vh^*(\vz)+\mathcal{L}_1(\vu)+\mathcal{L}_3(\vu),
\end{eqnarray}
where $\vh(\vu)=e^{-\vu}$, $\mathcal{L}_1(\vu) =\frac{1}{2} \log |X \diag(e^\vu)X^\top+\nsevar I|$ and $\mathcal{L}_3(\vu) = \frac{1}{2}(\vu-b\vones)^\top K^{-1}(\vu-b\vones)$.

1) Each element in $\vh(\vu)$ is bounded by 0 and 1. Thus when $||\vu||\rightarrow\infty$, ${\vz^k}^\top\vh(\vu)$ will be bounded. \\
2) $K^{-1}$ is a positive semi-definite (p.s.d.) matrix. Thus $\mathcal{L}_3(\vu)$ is lower-bounded by 0. When $||\vu||\rightarrow\infty$, $\mathcal{L}_3(\vu)\ge 0$. The upper bound is unclear.\\
3) Denote $\Gamma=\diag(e^\vu)$ whose diagonal values are all nonnegative. Now we want to prove when $||\vu||\rightarrow\infty$, we have $\mathcal{L}_1(\vu) =\frac{1}{2} \log |X \Gamma X^\top+\nsevar I| \rightarrow \infty$. 

First, we note that $\mathcal{L}_1(\vu)$ has a lower bound. It can be easily shown that the eigenvalues of $X \Gamma X^\top+\nsevar I$ should be greater than or equal to $\nsevar$, given $X \Gamma X^\top$ is a p.s.d. matrix. 

If $||\vu||\rightarrow\infty$, we can assume $u_1\rightarrow\infty,...,u_s\rightarrow\infty$ where $\vu\in\mathbb{R}^p$ and $s\le p$, then $e^{u_i}\rightarrow \infty$, for all $i\in\{1,...,s\}$. For $\{u_i\}_{i=s+1}^p$, if $u_i$ is finite, $e^{u_i}$ will be finite; else if $u_i\rightarrow -\infty$, $e^{u_i}=0$. Thus $e^{u_i}$ is a finite value for all $i\in\{s+1,...,p\}$. We can write $\Gamma$ as an addition of two matrices $A$ and $B$, i.e. $\Gamma=A+B$.
\begin{eqnarray}
A=\begin{bmatrix}
e^{u_1}&&&&&&\\
&e^{u_2}&&&&0&\\
&&\ddots&&&&\\
&&&e^{u_s}&&&\\
&&&&0&&\\
&0&&&&\ddots&\\
&&&&&&0
\end{bmatrix}, 
\quad B=\begin{bmatrix}
0&&&&&&\\
&0&&&&0&\\
&&\ddots&&&&\\
&&&0&&&\\
&&&&e^{u_{s+1}}&&\\
&0&&&&\ddots&\\
&&&&&&e^{u_p}
\end{bmatrix}
\end{eqnarray}
The nonzero elements in $A$ are infinite values. The nonzero elements in $B$ are finite nonnegative values.
Let $M=X B X^\top+\nsevar I\in\mathbb{R}^{n\times n}$. $X B X^\top$ is a p.s.d. matrix. The smallest eigenvalue of $X B X^\top$ should be nonnegative. Therefore the smallest eigenvalue of $M$ is greater than or equal to $\nsevar$. This implies the invertibility of $M$. Since $M^{-1}$ is also positive definite, we can factorize $M^{-1}$ into $S\in\mathbb{R}^{n\times n}$ and $S^\top $, i.e. $M^{-1}=SS^\top $. Thus, we can write
\begin{eqnarray}
\mathcal{L}_1(\vu) &=&\frac{1}{2} \log |X A X^\top+X B X^\top+\nsevar I|\\
&=&\frac{1}{2} \log |X A X^\top+M|\\
&=&\frac{1}{2} \log |X A X^\top M^{-1}+I|+\frac{1}{2}\log|M|\\
&=&\frac{1}{2} \log |X A X^\top SS^\top +I|+\frac{1}{2}\log|M|\\
&=&\frac{1}{2} \log |S^\top X A X^\top S +I|+\frac{1}{2}\log|M|
\end{eqnarray}
Combining $X^\top S$ to be one matrix $Z\in\mathbb{R}^{p\times n}$, we can investigate the elements in $Z^\top A Z$. $Z^\top A Z$ should be equal to $\infty*\widetilde{Z}^\top \widetilde{Z}$ where $\widetilde{Z}\in\mathbb{R}^{s\times n}$ is a trimmed $Z$ by throwing away the rows with indices from $s+1$ to $p$. Therefore, nonzero eigenvalues of $\widetilde{Z}^\top \widetilde{Z}$ will turn into $\infty$ in $Z^\top A Z$. Zero eigenvalues will remain zero. 

Let $\lambda_i\ge 0$ denote the $i$th eigenvalue of $Z^\top A Z$, then 
\begin{eqnarray}
\mathcal{L}_1(\vu) &=&\frac{1}{2} \sum_{i=1}^p\log (\lambda_i+1)+\frac{1}{2}\log|M|
\end{eqnarray}
Since there exists at least one eigenvalue $\lambda_i$ in $Z^\top A Z$ approaching $\infty$, we can conclude that $\mathcal{L}_1(\vu)$ also approaches $\infty$ in such a case. 

Accordingly, if $||\vu||\rightarrow\infty$, we have $\mathcal{F}(\vu) \rightarrow \infty$, then there must exist a solution set for minimizing $\mathcal{F}(\vu)$. This validates the nonemptiness of the solution set. Furthermore, the solution set must be bounded. If it's not bounded, there must be a solution at $\infty$ with the minimal $\mathcal{F}(\vu)$, but this contradicts the assumption that $\mathcal{F}(\vu)\rightarrow \infty$ when $||\vu||\rightarrow\infty$. Therefore, we can claim that the solution set of $\mathcal{F}(\vu)$ is bounded and nonempty.

\end{document}